\newtheorem{theorem}{\bf Theorem}[section]
\newtheorem{lemma}[theorem]{\bf Lemma}
\newtheorem{prop}[theorem]{\bf Proposition}
\newtheorem{coro}[theorem]{\bf Corollary}
\newtheorem{defn}[theorem]{\bf Definition}
\newenvironment{proof}{\noindent{\em Proof:}}{\quad \hfill$\Box$\vspace{2ex}}
\def \bI {\Bbb I}
\def \bN {\Bbb N}
\def \bR {\Bbb R}
\def \bY {\Bbb Y}
\def \bx {{\bf x}}
\def \by {{\bf y}}
\def \ba {{\bf a}}
\def \ca {{\bf a}}
\def \bc {{\bf c}}
\def \cc {{\bf c}}
\def \cd {{\bf d}}
\def \bd {{\bf d}}
\def \bz {{\bf z}}
\def \cH {{\cal H}}
\def \cE {{\cal E}}
\def \and {\, \mbox{\rm and}\, }
\newcommand{\Rmnum}[1]{\expandafter\@slowromancap\romannumeral #1@}
\begin{document}
\title{\bf Learning Rates for Multi-task Regularization Networks\thanks{Supported in part by National Natural Science Foundation of China under grant 11971490, and by Natural Science Foundation of Guangdong Province under grant 2018A030313841.}}
\author{Jie Gui\thanks{School of Computer Science and Engineering, Sun Yat-sen University, Guangzhou 510006, P. R. China. E-mail address: {\it guij6@mail2.sysu.edu.cn}.}\quad and\quad Haizhang Zhang\thanks{Corresponding author. School of Mathematics (Zhuhai), and Guangdong Province Key Laboratory of Computational Science, Sun Yat-sen University, Zhuhai 519082, P. R. China. E-mail address: {\it zhhaizh2@mail.sysu.edu.cn}. }}
\date{}
\maketitle
\begin{abstract}
Multi-task learning is an important trend of machine learning in facing the era of artificial intelligence and big data. Despite a large amount of researches on learning rate estimates of various single-task machine learning algorithms, there is little parallel work for multi-task learning. We present mathematical analysis on the learning rate estimate of multi-task learning based on the theory of vector-valued reproducing kernel Hilbert spaces and matrix-valued reproducing kernels. For the typical multi-task regularization networks, an explicit learning rate dependent both on the number of sample data and the number of tasks is obtained. It reveals that the generalization ability of multi-task learning algorithms is indeed affected as the number of tasks increases.

\noindent{\bf Keywords}: vector-valued reproducing kernel Hilbert spaces, multi-task learning, matrix-valued reproducing kernels, learning rates, regularization networks.
\end{abstract}

\section{Introduction}
\setcounter{equation}{0}

Machine learning designs algorithms so that computers can learn certain intelligent behaviors from finite sample data. The accuracy of the prediction function learned by an algorithm on new input data is called the generalization ability of the algorithm. The generalization ability is quantified by certain distance between the prediction function and the optimal function in mathematics. The rate at which the distance converges to zero as the number of data increasing to infinity is called the learning rate of the algorithm. Estimation of learning rates is a crucial question in mathematical study of machine learning.

The mathematical foundation of learning rate estimates was built by Cucker, Smale, Zhou and their collaborators \cite{CuckerSmale,CZ07}.  There has been an extensive collection of studies on learning rates for single-task machine learning methods (see, for example, \cite{ChenPanLi,GuoShi,HuangChen,SmaleZhou,SmaleZhou2,SongZhang,SunWu,TongChen}, and the references therein). The analysis is based on the theory of scalar-valued reproducing kernels and scalar-valued Reproducing Kernel Hilbert Spaces (RKHS) \cite{Aronszajn,ScSm}.

In facing the ear of big data, multi-task learning where the unknown target function is vector-valued appears more often in applications. Micchelli and Pontil proposed to study mathematics of multi-tasking learning based on operator-valued reproducing kernels and vector-valued RKHS \cite{EMP,MP2005}. Vector-valued RKHS were discovered and studied far earlier to the rising of machine learning, \cite{BM,Pedrick}. Since the initial studies \cite{EMP,MP2005}, much work has been devoted to the theory of operator-valued reproducing kernels and vector-valued RKHS. For example, general theory of vector-valued RKHS was extensively discussed in \cite{CDT,CDTU}; universal multi-task kernels were characterized in \cite{CMPY,CDTU}; different forms of the Mercer theorem for operator-valued reproducing kernels were established in \cite{CDT,DVS}; inclusion relations of vector-valued RKHS was systematically investigated in \cite{ZXZ}. The more general functional RKHS was studied in \cite{WangXu}. Even vector-valued reproducing kernel Banach spaces \cite{LSZ,ZZ13} has been defined and constructed.

Surprisingly, as far as we know, unlike the fruitful results on learning rate estimates of single-task learning based on scalar-valued RKHS, there has been little parallel work for multi-task learning based on vector-valued RKHS. This is the motivation of the paper. We desire to build necessary foundational mathematical results for learning rate analysis of multi-task learning. In other words, we want to see how far the theory of Cucker, Smale, and Zhou can be extended to multi-task learning. The other purpose is to see how the number of tasks would affect the learning rate. We target at the classical regularization networks \cite{CuckerSmale,EPP}. Many results to be presented will be useful for learning rate estimates of other multi-task learning algorithms.

The rest of the paper is organized as follows. In section 2, we introduce the setting of multi-task learning based on matrix-valued reproducing kernels and vector-valued RKHS. In section 3, we present the important Mercer theorem which yields a characterization of vector-valued RKHS. Finally, we estimate the learning rate of multi-task regularization networks. An explicit upper bound of the learning rate will be given, which shows that the generalization ability of the learning algorithm is indeed affected as the number of tasks increases.

\section{Multi-task Learning with Matrix-valued Reproducing Kernels}
\setcounter{equation}{0}

We start with the standard setting of mathematical learning theory. Let $X$ be a compact metric space modelling the input space of data and $Y=\bR^m$ be the corresponding output space. A finite sample data set $\bz:=\{(x_i,y_i)\in X\times Y:1\le i\le n\}$ drawn independently and identically distributed (i.i.d.) from an unknown probability measure $\rho$ on $X\times Y$ is available. Let us keep in mind the two important constants $n$ and $m$, which stand for the number of data and the number of tasks, respectively. The main goal of machine learning is to infer from the sample data a prediction function from $X$ to $Y$ that yields satisfactory outputs for new inputs.

In order to measure the accuracy of a candidate prediction function, we introduce some norms and function spaces. Vectors in $Y=\bR^m$ are always viewed as $m\times 1$ column vectors. Two kinds of norms on $Y$ will be used:
$$
\|y\|_2:=\bigl(\sum_{j=1}^m|y_j|^2\bigr)^{1/2}\mbox{ and }\|y\|_\infty:=\max_{1\le j\le m}|y_j|,\ y=(y_j:1\le j\le n)^T\in Y.
$$
The standard Euclidean norm $\|\cdot\|_2$ is induced from the inner product on $Y$:
$$
\langle \xi,\eta \rangle_{2}:=\eta^T\xi,\ \ \xi,\eta\in Y,
$$
where $\eta^T$ denotes the transpose of $\eta$. It also induces a matrix norm on all $m\times m$ matrices:
$$
	\left\|A\right\|_2=\max\limits_{y\neq \textbf{0} \atop y \in \bR^m}\frac{\left\|Ay\right\|_2}{\left\|y\right\|_2}, \ A\in\bR^{m\times m}.
	$$
Let $\rho_X$ be the marginal probability measure of $\rho$ on $X$. We denote by $L^2_{\rho}(X,Y)$ the space of all square-integrable functions from $X$ to $Y$ with respect to $\rho_X$. It is a Hilbert space with the inner product and norm
$$
(f,g)_\rho=\int_X f(x)^Tg(x)d\rho_X(x),\ \ \|f\|_\rho=\biggl(\int_X f(x)^Tf(x)d\rho_X(x)\biggr)^{1/2}.
$$

The generalization ability of a candidate prediction function $f:X\to Y$ is measured by
$$
\cE(f):=\int_{X\times Y}\|f(x)-y\|_2^2d\rho.
$$
The optimal function that minimizes this error is
$$
f_\rho(x):=\int_Y yd\rho(y|x), \ \ x\in X.
$$
Here $d\rho(y|x)$ is the conditional measure of $y$ with respect to $x$. Similar to the scalar-valued case in \cite{CuckerSmale}, we have for each $f\in L^2_{\rho}(X,Y)$
\begin{equation}\label{optimalrho}
\cE(f)-\cE(f_\rho)=\|f-f_\rho\|_{\rho}^2=\int_X\|f(x)-f_\rho(x)\|^2_2d\rho_X(x).
\end{equation}
In fact,
$$
\begin{aligned}
\cE(f)&=\int_{X\times Y}\|f(x)-y\|_2^2d\rho=\int_{X\times Y}\|f(x)-f_\rho(x)+f_\rho(x)-y\|_2^2d\rho \\
&=\int_{X\times Y}\|f(x)-f_\rho(x)\|^2_2d\rho+\int_{X\times Y}\|f_\rho(x)-y\|_2^2d\rho+2\int_{X\times Y}(f(x)-f_\rho(x))^T(f_\rho(x)-y)d\rho\\
& =\int_X\|f(x)-f_\rho(x)\|_2^2d\rho_X(x)\int_Yd\rho(y|x)+\cE(f_\rho)+
\int_X(f(x)-f_\rho(x))^Td\rho_X(x)\int_Y (f_\rho (x)-y)d\rho(y|x)\\
&=\int_X\|f(x)-f_\rho(x)\|^2_2d\rho_X(x)+\cE(f_\rho)+\int_X(f(x)-f_\rho(x))^T{\bf 0}d\rho_X(x)\\
&=\|f-f_\rho\|_{\rho_X}^2+\cE(f_\rho).\\
\end{aligned}
$$
However, as the probability measure $\rho$ is unknown, the optimal function $f_\rho$ is intractable. By (\ref{optimalrho}), we desire to learn a prediction function $f_\bz$ from the finite sample data $\bz$ so that $\|f_\bz-f_\rho\|_{\rho}$ is small.

To fulfill the task, we consider the classical regularization networks
\begin{equation}\label{mul-regnetworks}
 f_{\bz,\lambda}:=\arg\min_{f\in \cH_{K}}\frac{1}{n}\sum_{i=1}^n \left\|f(x_i)-y_i\right\|_2^2+\lambda\left\|f\right\|_{K}^2.
\end{equation}
Here $\lambda>0$ is a regularization parameter, $K$ is matrix-valued reproducing kernel on $X$, and $\cH_K$ is the corresponding vector-valued RKHS. We explain the definitions and notations in details below.

For simplicity, we denote by $L(Y,Y)$ the space of all bounded linear operators on $Y=\bR^m$. It coincides with the set of all $m\times m$ real matrices.

\begin{defn}\label{matrixkernel}
     We call a function $K: X \times X \to L(Y,Y)$ a {\bf matrix-valued reproducing kernel} on $X$ if it is {\bf symmetric} in the sense that
     $$
     K(x,x')=K(x',x)^T,\ \ \forall\ x,x'\in X
     $$
     and is {\bf positive-definite} in the sense that for all
     $x_1,x_2,\cdots,x_p \in X$ and all $\xi_i \in Y, 1 \leq i \leq p$, $p\in\bN$, it holds
     $$
     \sum_{i=1}^{p}\sum_{j=1}^{p} \langle K(x_i,x_j)\xi_i,\xi_j\rangle_2  \ge 0.
     $$
\end{defn}

\begin{defn}\label{vectorRKHS}
     A {\bf vector-valued reproducing kernel Hilbert space (RKHS)} on $X$ is a Hilbert space $\cH$ of certain functions from $X$ to $Y$ such that for each $x\in X$,
     $$
	\delta_x(f) :=  f(x), \ \ f\in \cH
	$$
is a continuous linear operator from $\cH$ to $Y$.
\end{defn}

A matrix-valued reproducing kernel $K$ on $X$ corresponds to a unique vector-valued RKHS on $X$, which we denote as $\cH_K$. The inner product and norm on $\cH_K$ is denote by $\langle\cdot,\cdot\rangle_K$ and $\|\cdot\|_K$, respectively. There are some important properties \cite{MP2005,Pedrick} of $K$ and $\cH_K$ that will be frequently used in later discussion.
\begin{enumerate}
\item For each $x\in X$, $K(x,x)$ is a positive-definite matrix, that is,
$$
\langle K(x,x)\xi,\xi\rangle_2\ge0,\ \ \xi\in Y.
$$

\item For all $x\in X$ and $\xi\in Y$, $K(x,\cdot)\xi\in\cH_K$ and there holds the reproducing identity:
\begin{equation}\label{riesz-represent-theorem}
	\langle f(x),\xi \rangle _2 =\langle f,K(x,\cdot)\xi \rangle_K, \ \ \forall f\in \cH_K,x\in X,\xi \in Y.
\end{equation}

\item The linear span of $\{K(x,\cdot)\xi:x\in X,\ \xi\in Y\}$ is dense in $\cH_K$.

\item For all $f\in\cH_K$ and $x\in X$, it holds
\begin{equation}\label{normofpoint}
\|f(x)\|_2\le \sqrt{\|K(x,x)\|_2}\|f\|_K.
\end{equation}
\end{enumerate}

The following representer theorem on the minimizer of (\ref{mul-regnetworks}) is well-known \cite{AMP2009,EMP,MP2005}.

\begin{lemma}\label{representertheorem}
The minimizer $f_{\bz,\lambda}$ of the regularization networks algorithm (\ref{mul-regnetworks}) exists and is unique. Moreover, there exist $c_i\in Y,1\leq i \leq n$ such that
$$
f_{\bz,\lambda}=\sum_{i=1}^{n}K(x_i,\cdot)c_i.
$$
\end{lemma}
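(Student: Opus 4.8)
The plan is to prove existence and uniqueness via the standard convexity argument, and then derive the representer form by a projection argument exploiting the reproducing identity (\ref{riesz-represent-theorem}). First I would observe that the objective functional
$$
\cF(f):=\frac{1}{n}\sum_{i=1}^n\|f(x_i)-y_i\|_2^2+\lambda\|f\|_K^2
$$
is well-defined on $\cH_K$ because each point evaluation $\delta_{x_i}$ is continuous by Definition \ref{vectorRKHS}, so $f\mapsto\|f(x_i)-y_i\|_2^2$ is continuous; moreover, by (\ref{normofpoint}), the data-fitting term is a continuous convex function of $f$, while $\lambda\|f\|_K^2$ is strictly convex and coercive ($\cF(f)\to\infty$ as $\|f\|_K\to\infty$, since the first term is nonnegative). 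Hence $\cF$ is strictly convex, continuous, and coercive on the Hilbert space $\cH_K$, which gives existence and uniqueness of the minimizer $f_{\bz,\lambda}$ by the direct method (a minimizing sequence is bounded, hence has a weakly convergent subsequence, and $\cF$ is weakly lower semicontinuous being convex and continuous).

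Next I would establish the representer form. Let $\cH_0:=\span\{K(x_i,\cdot)\xi:1\le i\le n,\ \xi\in Y\}$, a finite-dimensional (hence closed) subspace of $\cH_K$, and write any $f\in\cH_K$ as $f=f_0+f_1$ with $f_0\in\cH_0$ and $f_1\in\cH_0^\perp$. The key point is that the data-fitting term depends only on $f_0$: by the reproducing identity (\ref{riesz-represent-theorem}), for any $\xi\in Y$,
$$
\langle f_1(x_i),\xi\rangle_2=\langle f_1,K(x_i,\cdot)\xi\rangle_K=0
$$
since $K(x_i,\cdot)\xi\in\cH_0$; as $\xi$ is arbitrary, $f_1(x_i)=\mathbf{0}$, so $f(x_i)=f_0(x_i)$ for every $i$. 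Meanwhile $\|f\|_K^2=\|f_0\|_K^2+\|f_1\|_K^2\ge\|f_0\|_K^2$ with equality iff $f_1=0$. Therefore $\cF(f_0)\le\cF(f)$ with strict inequality unless $f_1=0$, which forces the minimizer to lie in $\cH_0$. Writing an element of $\cH_0$ in the form $\sum_{i=1}^n K(x_i,\cdot)c_i$ with $c_i\in Y$ yields the claimed representation.

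I do not anticipate a serious obstacle here — the argument is the vector-valued analogue of the classical scalar representer theorem, and all the needed ingredients (continuity of evaluations, the reproducing identity, and the Pythagorean identity for the orthogonal decomposition) are already listed as properties of $\cH_K$ in the excerpt. The one point requiring a little care is verifying that $f_1(x_i)=\mathbf{0}$ rather than merely that $f_1$ has small norm; this is exactly where one uses that the reproducing kernel sections $K(x_i,\cdot)\xi$ span $\cH_0$ and that $\xi$ ranges over all of $Y$, so that vanishing of all inner products $\langle f_1(x_i),\xi\rangle_2$ gives vanishing of the vector $f_1(x_i)$ itself. Uniqueness of the coefficients $c_i$ is not asserted and need not be addressed.
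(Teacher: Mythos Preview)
Your argument is correct and is precisely the standard proof of the vector-valued representer theorem. Note, however, that the paper does not supply its own proof of this lemma: it is stated as a well-known result with citations to \cite{AMP2009,EMP,MP2005}, and the paper then proceeds to use it as input for Theorem \ref{obvious-answer}. Your proposal thus fills in what the paper leaves to the literature, and the approach you give (strict convexity and coercivity for existence/uniqueness, orthogonal decomposition plus the reproducing identity for the representer form) is exactly the argument one finds in those references.
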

 We need to find the coefficients $c_i$'s in the above equation for $f_{\bz,\lambda}$. To this end, we introduce the following sampling operator and its adjoint.

\begin{defn}\label{sample}
	{\bf  (vector-valued sampling operator)}   Given a finite set of sampling points $\bx=\{x_i\}_{i=1}^{n}$, we define the sampling operator $S_\bx:\cH_K \to Y^{n}$ by
	$$
	S_\bx f:=f(\bx), \ \ \forall f\in \cH_K,
	$$
	where $f(\bx)=(f(x_i):1 \leq i \leq n)\in Y^{n}.$
\end{defn}
The inner product on the tensor-product space $Y^n$ is
$$
\langle \bc,\bd\rangle_{Y^n}:=\sum_{i=1}^n \langle c_i,d_i\rangle_{2},\ \ \bc=(c_i:1\le i\le n)\in\bY^n,\ \bd=(d_i:1\le i\le n)\in Y^n.
$$
Thus,
$$
\langle S_{\bx}f,\bc \rangle_{Y^n} =\langle f(x),\bc \rangle_{Y^n}=\sum_{i=1}^{n}\langle f(x_i),c_i \rangle_Y=\langle f, \sum_{i=1}^{n}K(x_i,\cdot)c_i \rangle_K,
$$
which implies that the adjoint operator $S_\bx^*$ of $S_\bx$ is
$$
S_{\bx}^* \bc=\sum_{i=1}^{n}K(x_i,\cdot)c_i, \ \ \bc=(c_i:1\leq i \leq n) \in Y^n.
$$

We are now ready to derive an explicit expression of $f_{\bz,\lambda}$. Denote for each $f\in\cH_K$
\begin{equation}\label{mul-twoerrors}
E_{\bz,\lambda}(f)=\frac{1}{n}\sum_{i=1}^n\left\|f(x_i)-y_i\right\|_{2}^2+\lambda\left\|f\right\|_{K}^2.
\end{equation}
\begin{theorem}\label{obvious-answer}
	The minimizer of (\ref{mul-regnetworks}) has the expression
	\begin{equation}\label{representerequ}
	 f_{\bz,\lambda}=(\frac{1}{n}S_{\bx}^* S_{\bx}  +\lambda I )^{-1}\frac{1}{n}S_{\bx}^* \by,
	\end{equation}
	where $\by:=(y_i:1\le i\le n)\in Y^n$ and $I$ is the identity operator on $\cH_{K}$.
\end{theorem}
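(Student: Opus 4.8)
The plan is to recast the functional $E_{\bz,\lambda}$ as a quadratic form on the Hilbert space $\cH_K$ and then minimize it by completing the square. First I would observe that, by the very definition of the sampling operator, $\frac1n\sum_{i=1}^n\|f(x_i)-y_i\|_2^2=\frac1n\|S_\bx f-\by\|_{Y^n}^2$, so that $E_{\bz,\lambda}(f)=\frac1n\|S_\bx f-\by\|_{Y^n}^2+\lambda\|f\|_K^2$. Before proceeding I would record that $S_\bx$ is a bounded operator: by property (\ref{normofpoint}), $\|S_\bx f\|_{Y^n}^2=\sum_{i=1}^n\|f(x_i)\|_2^2\le\bigl(\sum_{i=1}^n\|K(x_i,x_i)\|_2\bigr)\|f\|_K^2$, hence its adjoint $S_\bx^*$ (computed in the text) is also bounded and everything below makes sense on all of $\cH_K$.

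Next I would expand the quadratic. Writing $\langle\cdot,\cdot\rangle$ for $\langle\cdot,\cdot\rangle_K$, one gets
$$
E_{\bz,\lambda}(f)=\tfrac1n\langle S_\bx f,S_\bx f\rangle_{Y^n}-\tfrac2n\langle S_\bx f,\by\rangle_{Y^n}+\tfrac1n\langle\by,\by\rangle_{Y^n}+\lambda\langle f,f\rangle
=\langle Af,f\rangle-2\langle b,f\rangle+\tfrac1n\|\by\|_{Y^n}^2,
$$
where $A:=\tfrac1n S_\bx^*S_\bx+\lambda I$ and $b:=\tfrac1n S_\bx^*\by\in\cH_K$, using $\langle S_\bx f,\by\rangle_{Y^n}=\langle f,S_\bx^*\by\rangle$ and $\langle S_\bx f,S_\bx f\rangle_{Y^n}=\langle S_\bx^*S_\bx f,f\rangle$. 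The operator $A$ is bounded and self-adjoint because $S_\bx^*S_\bx$ is, and it is coercive: $\langle Af,f\rangle=\tfrac1n\|S_\bx f\|_{Y^n}^2+\lambda\|f\|_K^2\ge\lambda\|f\|_K^2$. Coercivity forces $A$ to be injective with closed range, and self-adjointness then gives $\ran A=(\ker A)^\perp=\cH_K$, so $A$ is boundedly invertible with $\|A^{-1}\|\le1/\lambda$; in particular $A^{-1}b$ is a well-defined element of $\cH_K$.

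Finally I would complete the square: since $A=A^{1/2}A^{1/2}$ with $A^{1/2}$ self-adjoint and invertible,
$$
E_{\bz,\lambda}(f)=\langle A(f-A^{-1}b),\,f-A^{-1}b\rangle-\langle A^{-1}b,b\rangle+\tfrac1n\|\by\|_{Y^n}^2
=\bigl\|A^{1/2}(f-A^{-1}b)\bigr\|_K^2+C,
$$
with $C$ independent of $f$. Because $A^{1/2}$ is injective, the right-hand side is uniquely minimized at $f=A^{-1}b$, i.e. $f_{\bz,\lambda}=A^{-1}b=\bigl(\tfrac1n S_\bx^*S_\bx+\lambda I\bigr)^{-1}\tfrac1n S_\bx^*\by$, which is (\ref{representerequ}). (Existence and uniqueness are in any case already guaranteed by Lemma \ref{representertheorem}; alternatively, one can derive the same identity by setting the Fréchet derivative $\tfrac{d}{dt}\big|_{t=0}E_{\bz,\lambda}(f+tg)=2\langle Af-b,g\rangle$ to zero for all $g\in\cH_K$.) No step is a genuine obstacle here; the only point requiring a word of care is the bounded invertibility of $A$, which the coercivity bound $A\ge\lambda I$ settles at once.
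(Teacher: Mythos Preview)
Your proof is correct, but it proceeds along a genuinely different route from the paper's. The paper first invokes the representer theorem (Lemma \ref{representertheorem}) to reduce the search to functions of the form $S_\bx^*\cd$ with $\cd\in Y^n$, rewrites $E_{\bz,\lambda}$ as a finite-dimensional quadratic in $\cd$ involving the $nm\times nm$ matrix $S_\bx S_\bx^*$, solves the resulting normal equation $\frac1n(S_\bx S_\bx^*\cc-\by)+\lambda\cc=0$, and only at the end uses the operator identity $S_\bx^*(\frac1n S_\bx S_\bx^*+\lambda J)^{-1}=(\frac1n S_\bx^*S_\bx+\lambda I)^{-1}S_\bx^*$ to pass to the form (\ref{representerequ}). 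You instead stay in $\cH_K$ throughout: you show directly that $A=\frac1n S_\bx^*S_\bx+\lambda I$ is self-adjoint and coercive, hence boundedly invertible, and complete the square to read off the minimizer. Your approach is cleaner in that it bypasses both the representer theorem and the final operator identity, and it makes the invertibility of $A$ (with $\|A^{-1}\|\le1/\lambda$) explicit, which is exactly what the later sampling-error analysis uses. The paper's approach, on the other hand, makes the connection to the finite linear system one actually solves in practice more transparent.
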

\begin{proof}
By Lemma \ref{representertheorem}, there exists $\bc\in Y^n$ such that the minimizer of (\ref{mul-regnetworks}) has the form
$$
f_{\bz,\lambda}=\sum_{i=1}^{n}K(x_i,\cdot)c_i=S_{\bx}^* \bc.
$$
Therefore, we desire to find the minimizer of (\ref{mul-regnetworks}) among functions of the form
$$
f=S_{\bx}^* \cd,\ \cd=(d_i:1\le i\le n)\in Y^n.
$$
Substitute this form into (\ref{mul-twoerrors}) to get
$$
\begin{aligned}
E_{\bz,\lambda}(f)&=\frac{1}{n}\left\| f(x) -{\by}\right\|_{Y^n}^2+\lambda \langle f,f\rangle_{K} \\
&=\frac{1}{n}\left\|S_{\bx}f -{\by}\right\|_{Y^n}^2+\lambda \langle S_{\bx}^* \cd,S_{\bx}^* \cd\rangle _{K} \\
&=\frac{1}{n}\left\| A\cd -{\by}\right\|_{Y^n}^2+\lambda \langle A\cd, \cd\rangle _{Y^n} \\
:&=E(\cd),
\end{aligned}
$$
where $A:=S_\bx S_\bx^*$. Since $K$ is positive-definite and
$$
\cd^TA\cd=\sum_{i=1}^n\sum_{j=1}^n \langle K(x_i,x_j)d_i,d_j\rangle_2,\ \ \cd=(d_i:1\le i\le n)\in Y^n,
$$
the matrix $A$ is symmetric and positive definite. As a consequence, for all $\lambda>0$,
\begin{equation}\label{obvious-answereq3}
\frac{1}{n}(A\cc-\by)+\lambda \cc=0
\end{equation}
possesses a unique solution
\begin{equation}\label{obvious-answereq4}
\cc=(\frac{1}{n}S_{\bx}S_{\bx}^{*}+\lambda J)^{-1}\frac{1}{n}\by,
\end{equation}
where $J$ is the $nm\times nm$ identity matrix. We next prove that this solution $\cc$ satisfies
$$
\cc=\arg\min_{\cd\in Y^n} E(\cd).
$$
To this end, it suffices to show that
$$
E(\cc+\cd)\ge E(\cc)\mbox{ for all }\cd\in Y^n
$$
as $Y^n$ is a vector space. We compute
$$
\begin{aligned}
E(\cc+\cd)&=\frac{1}{n}\left\|A(\cc+\cd)-\by\right\|_{Y^n}^2+\lambda \langle A(\cc+\cd),\cc+\cd \rangle_{Y^n} \\
&=\frac{1}{n}\left\|A\cc-\by\right\|_{Y^n}^2+\frac{1}{n}\left\|A\cd \right\|_{Y^n}^2+\lambda\langle A\cc,\cc\rangle_{Y^n}+\lambda\langle A\cd,\cd\rangle_{Y^n}\\
&\quad+\frac2n\langle A\cc-\by,A\cd\rangle_{Y^n}+2\lambda\langle A\cc,\cd\rangle_{Y^n}.\\
\end{aligned}
$$
By (\ref{obvious-answereq3}) and by that $A$ is symmetric,
$$
\begin{aligned}
\frac2n\langle A\cc-\by,A\cd\rangle_{Y^n}+2\lambda\langle A\cc,\cd\rangle_{Y^n}&=\frac2n\langle A\cc-\by,A\cd\rangle_{Y^n}+2\lambda\langle \cc,A\cd\rangle_{Y^n}\\
&=2\langle \frac{1}{n}(A\cc-\by)+\lambda \cc,A\cd\rangle_{Y^n}\\
&=0.
\end{aligned}
$$
Combining the above two equations, we get by the positive-definiteness of $A$ that
$$
E(\cc+\cd)\ge \frac{1}{n}\left\|A\cc-\by\right\|_{Y^n}^2+\lambda\langle A\cc,\cc\rangle_{Y^n}=E(\cc)\mbox{ for all } \cd\in Y^n,
$$
which confirms that $\cc$ given by (\ref{obvious-answereq4}) is a minimizer of $E(\cd)$.

We conclude that the minimizer $f_{\bz,\lambda}$ of (\ref{mul-regnetworks}) is given by
\begin{equation}\label{mul-explicitsolution}
    f_{\bz,\lambda}=S_\bx^*c=S_\bx^*(\frac{1}{n}S_{\bx}S_{\bx}^{*}+\lambda J)^{-1}\frac{1}{n}\by.
\end{equation}
Notice
$$
	S_{\bx}^*(\frac{1}{n}S_{\bx}S_{\bx}^{*}+\lambda J)=(\frac{1}{n}S_{\bx}^{*}S_{\bx}+\lambda I)S_{\bx}^*.
$$
It follows
$$
S_\bx^*(\frac{1}{n}S_{\bx}S_{\bx}^{*}+\lambda J)^{-1}=(\frac{1}{n}S_{\bx}^* S_{\bx}  +\lambda I )^{-1}S_{\bx}^*.
$$
Combining the above equation with (\ref{mul-explicitsolution}) yields
$$
f_{\bz,\lambda}=S_\bx^*(\frac{1}{n}S_{\bx}S_{\bx}^{*}+\lambda J)^{-1}\frac{1}{n}\by=
    (\frac{1}{n}S_{\bx}^* S_{\bx}  +\lambda I )^{-1}\frac{1}{n}S_{\bx}^* \by,
$$	
which completes the proof.
\end{proof}

We now have an explicit expression of $f_{\bz,\lambda}$. To estimate its distance to the optimal predictor $f_\rho$, we need to understand the vector-valued RKHS $\cH_K$. This will be done by the Mercer theorem via the integral operator on $L^2_{\rho}(X,Y)$ with kernel $K$.

\section{Characterizing Vector-valued RKHS by Mercer's Theorem}
\setcounter{equation}{0}

Recall that $X$ is a compact metric space, $Y=\bR^m$ and $K:X\times X\to L(Y,Y)$ is a continuous matrix-valued reproducing kernel on $X$. Also recall that $\rho_X$ is the marginal probability measure of $\rho$ on $X$ and $L^2_\rho(X,Y)$ denotes the space of all square-integrable functions from $X$ to $Y$ with respect to $\rho_X$. The inner product and norm on $L^2_{\rho}(X,Y)$ are
$$
(f,g)_\rho=\int_X f(x)^Tg(x)d\rho_X(x),\ \ \|f\|_\rho=\biggl(\int_X f(x)^Tf(x)d\rho_X(x)\biggr)^{1/2}.
$$
The vector-valued RKHS $\cH_K$ of $K$ will be characterized by the integral operator
\begin{equation}\label{integral}
	(L_K f)(x):=\int_{X}K(x,x')f(x')d\rho_{X}(x'),\ \ x \in X, \ \ f \in L^{2}_{\rho}(X,Y).
	\end{equation}
A few well-known properties \cite{CDT,CDTU} of $L_K$ will be needed:
\begin{enumerate}
\item For each $f\in L^{2}_{\rho}(X,Y)$, $L_K(f)$ lies in $ C(X,Y)$, the space of continuous functions from $X$ to $Y$.

\item The operator $L_K$ is compact on $L^{2}_{\rho}(X,Y)$. It is also self-adjoint and positive, that is,
$$
\langle L_K f,g\rangle_\rho=\langle f,L_Kg\rangle_\rho,\ \ \langle L_K f,f\rangle_\rho\ge 0,\ \ f,g\in L^{2}_{\rho}(X,Y).
$$
\end{enumerate}
Thus, all eigenvalues of $L_K$ are nonnegative. We remark that the case when $L_K$ has only finitely many nonzero eigenvalues is easier to handle and is hence not considered in the paper. Avoiding this case, we obtain by the theory of compact operators in functional analysis pairs of eigenfunctions and eigenvalues $(\phi_n,\lambda_n)$, $n\in\bN$ of $L_K$ such that
 $$
 L_K\phi_n=\lambda_n\phi_n,\ \ \lambda_n\ge \lambda_{n+1}>0, n\in\bN, \ \mbox{ and }\lim_{n\to\infty}\lambda_n=0.
 $$
 Moreover, $\{\phi_n:n\in\bN\}$ is an orthonormal sequence in $L^2_\rho(X,Y)$ and there holds
 \begin{equation}\label{orthogonalg}
 L_Kg=0\mbox{ for all }g\in L^2_\rho(X,Y)\mbox{ that is orthogonal to every }\phi_n,\ n\in\bN.
 \end{equation}

The celebrated Mercer's theorem \cite{CDT,DVS,Sun}, which states that $K$ can be expressed as a series in terms of the eigenfunctions and eigenvalues of $L_K$, plays an important role in learning theory. We shall need the following form of the Mercer theorem. It is worthwhile to point out the measure $\rho_X$ needs to be {\it non-degenerated} in order for the theorem to hold true. This is sometimes neglected in some references.

\begin{defn}
	{\bf  (non-degenerated measures)}   A positive Borel measure $\mu$ on a metric space $X$ is said to be non-degenerated if for every nonempty open subset $U\subseteq X$, $\mu(U)>0$.
\end{defn}

\begin{lemma}\label{mercer}\cite{CDT,DVS}
Let $X$ be compact metric space, $K$ be continuous matrix-valued reproducing kernel on $X$, and $\rho_X$ be non-degenerated on $X$. Suppose $(\phi_n,\lambda_n)$, $n\in\bN$ are the eigenfunctions and eigenvalues of $L_K$. Then
\begin{equation}\label{mercereq}
	K(x,y)=\sum_{n\in \bN}\lambda_{n}\phi_n(x)\phi_{n}^{T}(y), \ \ x,y\in X,
	\end{equation}
where the series converges uniformly and absolutely on $X\times X$.
\end{lemma}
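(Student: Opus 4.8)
\medskip
\noindent\emph{Proof idea.} The plan is to exhibit an explicit orthonormal basis of $\cH_K$ built from the eigenfunctions $\phi_n$, deduce the expansion (\ref{mercereq}) by writing the kernel sections $K(x,\cdot)\xi$ in this basis, and then upgrade the resulting pointwise identity to uniform and absolute convergence via Dini's theorem. Two regularity facts will be used throughout: each $\phi_n=\lambda_n^{-1}L_K\phi_n$ is continuous (since $L_K$ maps $L^2_\rho(X,Y)$ into $C(X,Y)$ and $\lambda_n>0$), and every $f\in\cH_K$ is continuous --- this follows from $\|f(x)-f(x')\|_2\le\|f\|_K\sup_{\|\xi\|_2=1}\|K(x,\cdot)\xi-K(x',\cdot)\xi\|_K$ together with (\ref{riesz-represent-theorem}) and the continuity of $K$.

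First I would put $\psi_n:=\sqrt{\lambda_n}\,\phi_n$ and establish that $\{\psi_n:n\in\bN\}$ is an orthonormal system in $\cH_K$. The main device is to approximate, for a continuous $g:X\to Y$, the integral $L_Kg=\int_X K(\cdot,x')g(x')\,d\rho_X(x')$ by finite sums $\sum_i K(\cdot,x_i')g(x_i')\,\rho_X(U_i)$ over partitions $\{U_i\}$ of the compact space $X$ into Borel sets of small diameter. Using uniform continuity of $K$ and $g$, one shows that these sums form a Cauchy net in the norm of $\cH_K$ --- the pairwise $\cH_K$-inner products all tend to the same double integral --- and hence converge in $\cH_K$; since they also converge pointwise to $L_Kg$, and norm convergence in $\cH_K$ implies pointwise convergence by (\ref{normofpoint}), the limit is $L_Kg$. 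Taking $\cH_K$-inner products of two such sums and passing to the limit, with (\ref{riesz-represent-theorem}), yields the identities I need: $L_Kg\in\cH_K$ with $\|L_Kg\|_K^2=(L_Kg,g)_\rho$ (so $\psi_n=\lambda_n^{-1/2}L_K\phi_n\in\cH_K$ with $\|\psi_n\|_K=1$), and $\langle f,L_Kg\rangle_K=(f,g)_\rho$ for every $f\in\cH_K$ and continuous $g$; the latter gives $\langle\psi_n,\psi_m\rangle_K=\delta_{nm}$, $\langle f,\psi_n\rangle_K=\lambda_n^{-1/2}(f,\phi_n)_\rho$, and (with $g=f$) $\langle f,L_Kf\rangle_K=\|f\|_\rho^2$.

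Next I would show $\{\psi_n\}$ is actually an orthonormal \emph{basis}, and this is precisely where non-degeneracy of $\rho_X$ enters. If $f\in\cH_K$ has $\langle f,\psi_n\rangle_K=0$ for all $n$, then $(f,\phi_n)_\rho=0$ for all $n$, so $f$ is orthogonal in $L^2_\rho(X,Y)$ to every eigenfunction and hence $L_Kf=0$ by (\ref{orthogonalg}); then $\|f\|_\rho^2=\langle f,L_Kf\rangle_K=0$, so $f=0$ $\rho_X$-a.e., and since $f$ is continuous while $\rho_X$ gives positive mass to every nonempty open set, $f\equiv0$. Thus $\{\psi_n\}$ spans a dense subspace. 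Now fix $x\in X$ and $\xi\in Y$; expanding $K(x,\cdot)\xi\in\cH_K$ in the basis $\{\psi_n\}$ and using (\ref{riesz-represent-theorem}) gives the $\cH_K$-convergent expansion $K(x,\cdot)\xi=\sum_n\langle\psi_n(x),\xi\rangle_2\,\psi_n$, and applying the (continuous, by (\ref{normofpoint})) evaluation functional at $y$, together with $\psi_n=\sqrt{\lambda_n}\,\phi_n$, yields $K(y,x)\xi=\sum_n\lambda_n\phi_n(y)\phi_n(x)^T\xi$ for all $\xi$; since $\xi$ is arbitrary this is the pointwise form of (\ref{mercereq}) (after relabelling $x$ and $y$).

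Finally I would upgrade pointwise to uniform and absolute convergence. Taking traces in the diagonal identity $K(x,x)=\sum_n\lambda_n\phi_n(x)\phi_n(x)^T$ gives $\sum_n\lambda_n\|\phi_n(x)\|_2^2={\rm tr}\,K(x,x)$, a series of continuous nonnegative terms whose partial sums increase to the continuous function $x\mapsto{\rm tr}\,K(x,x)$; Dini's theorem on the compact set $X$ then forces $\varepsilon_N:=\sup_{x\in X}\sum_{n>N}\lambda_n\|\phi_n(x)\|_2^2\to0$. For $M>N$ and $(x,y)\in X\times X$, Cauchy--Schwarz gives $\sum_{n=N+1}^M\lambda_n\|\phi_n(x)\phi_n^T(y)\|_2\le\sum_{n=N+1}^M\lambda_n\|\phi_n(x)\|_2\,\|\phi_n(y)\|_2\le\varepsilon_N$, so the series in (\ref{mercereq}) converges absolutely and uniformly on $X\times X$, and its sum is continuous as it must be. I expect the main obstacle to be the first two steps --- making the approximating-sum argument rigorous so as to land in the strong $\cH_K$-topology and extract the inner-product identities --- and, conceptually, the non-degeneracy step, where one must be careful that vanishing in $L^2_\rho(X,Y)$ forces vanishing everywhere only because elements of $\cH_K$ are continuous and $\rho_X$ is non-degenerate; the Dini step at the end is then routine.
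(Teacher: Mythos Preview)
Your argument is sound and follows the standard route to Mercer's theorem for matrix-valued kernels: build the orthonormal system $\psi_n=\sqrt{\lambda_n}\,\phi_n$ in $\cH_K$ via the identity $\langle f,L_Kg\rangle_K=(f,g)_\rho$, use non-degeneracy of $\rho_X$ together with continuity of elements of $\cH_K$ to promote it to a basis, expand the kernel sections, and finish with Dini on the trace. Each step checks out as you describe it; the only place to be especially careful is exactly where you flag it --- showing that the Riemann-type sums for $L_Kg$ are Cauchy in $\cH_K$, which follows from uniform continuity of $K$ on $X\times X$ and the formula $\|\sum_i K(x_i,\cdot)c_i\|_K^2=\sum_{i,j}\langle K(x_i,x_j)c_i,c_j\rangle_2$.

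There is, however, nothing in the paper to compare your proof against: Lemma~\ref{mercer} is stated with a citation to \cite{CDT,DVS} and no proof is given --- the authors invoke it as a known result and then use it to prove Theorem~\ref{Mercer-therom-result}. Your write-up is essentially a self-contained reconstruction of the argument in those references, adapted to the present notation; it is correct and would serve as a fine proof, but the paper itself simply defers to the literature.
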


We need some more definitions in order to characterize $\cH_K$ by the Mercer theorem.

\begin{defn}\label{square-root-operator}
	For $r>0$, denote by $L_K^{r}$ the linear operator on $L_{\rho}^2(X,Y)$ determined by
	$$
	L_K^{r}\phi_n=\lambda_{n}^{r}\phi_n, \ \ \forall n\in\bN
	$$
	and
	$$
	L_K^{r}g=0,\ \ \text{if }g\bot\phi_n,\ \forall n\in \bN.
	$$
In particular, $L_K^{1/2}$ is called the {\bf square-root operator} of $L_K$.

We also denote by $P_{\Phi}$ the {\bf orthogonal projection} of $L_{\rho}^2(X,Y)$ onto the closed subspace $\overline{span}\{\phi_{n}:n\in \bN \}$, where the closure is taken with respect to the norm on $L_{\rho}^2(X,Y)$.
\end{defn}

\begin{theorem}\label{Mercer-therom-result}
	Let $X$ be a compact metric space, $K$ be a continuous matrix-valued reproducing kernel on $X$, and $\rho_X$ be non-degenerated on $X$. Suppose $(\phi_n,\lambda_n)$, $n\in\bN$ are the eigenfunctions and eigenvalues of $L_K$. Then
	$$
	\mathcal{H}_{K}= L_K^{1/2}(L_{\rho}^2(X,Y))=\biggl\{f_c=\sum_{j\in \bN}c_j\phi_j :\sum_{j\in \bN}\frac{|c_j|^{2}}{\lambda_j}< + \infty\biggr\}
	$$
	and
$$
	\left\|L_K^{1/2}(f)\right\|_{K}=\left\|P_{\Phi}f\right\|_{\rho}, \ \ f\in L_{\rho}^2(X,Y).
	$$
\end{theorem}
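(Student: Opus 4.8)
The plan is to realize $\cH_K$ concretely from Mercer's expansion and then match it with the range of $L_K^{1/2}$. Introduce the candidate space
$$
\cA:=\Bigl\{\,f_c=\sum_{j\in\bN}c_j\phi_j\ :\ (c_j)_{j\in\bN}\subseteq\bR,\ \ \|f_c\|_\cA^2:=\sum_{j\in\bN}\frac{|c_j|^2}{\lambda_j}<\infty\,\Bigr\},\qquad\langle f_c,f_d\rangle_\cA:=\sum_{j\in\bN}\frac{c_jd_j}{\lambda_j}.
$$
The second displayed equality in the theorem is the assertion $\cA=\cH_K$, and I would obtain it by showing that $\cA$ is a vector-valued RKHS in the sense of Definition~\ref{vectorRKHS} whose reproducing kernel is exactly $K$, and then invoking the uniqueness of the vector-valued RKHS attached to a given matrix-valued reproducing kernel.

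First I would check that $\cA$ is a genuine Hilbert space of functions $X\to Y$. Each $\phi_j$ is continuous, since $\phi_j=\lambda_j^{-1}L_K\phi_j$ and $L_K$ maps $L^2_\rho(X,Y)$ into $C(X,Y)$. Taking the trace in Mercer's formula (\ref{mercereq}) gives $\sum_{n\in\bN}\lambda_n\|\phi_n(x)\|_2^2={\rm tr}\,K(x,x)\le m\,\|K(x,x)\|_2$, which is $\le M<\infty$ uniformly on the compact set $X$ by continuity of $K$; Cauchy--Schwarz then yields, for $N\le N'$,
$$
\Bigl\|\sum_{j=N}^{N'}c_j\phi_j(x)\Bigr\|_2\ \le\ \Bigl(\sum_{j=N}^{N'}\frac{|c_j|^2}{\lambda_j}\Bigr)^{1/2}\Bigl(\sum_{j=N}^{N'}\lambda_j\|\phi_j(x)\|_2^2\Bigr)^{1/2}\ \le\ M^{1/2}\Bigl(\sum_{j=N}^{N'}\frac{|c_j|^2}{\lambda_j}\Bigr)^{1/2}.
$$
Hence $\sum_j c_j\phi_j$ converges uniformly on $X$, so $f_c\in C(X,Y)$ and its $L^2_\rho$-class coincides with the $L^2_\rho$-sum; the map $c\mapsto f_c$ is injective (if $f_c\equiv 0$ then $\|f_c\|_\rho^2=\sum_j|c_j|^2=0$ by orthonormality of $\{\phi_j\}$); and $\cA$ is complete because the weighted $\ell^2$-space of coefficients is. Taking $N=1$ above shows $\|f_c(x)\|_2\le M^{1/2}\|f_c\|_\cA$, i.e.\ every evaluation operator $\delta_x$ is continuous on $\cA$.

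Next I would identify the reproducing kernel of $\cA$ with $K$. Fix $x\in X$ and $\xi\in Y$. Evaluated at $t\in X$, the Mercer expansion gives $\sum_{n\in\bN}\lambda_n\phi_n(t)\,\phi_n(x)^T\xi=K(t,x)\xi$, so (using $\phi_n(x)^T\xi=\langle\xi,\phi_n(x)\rangle_2$) the kernel section $K(x,\cdot)\xi$ appearing in (\ref{riesz-represent-theorem}) equals $g_{x,\xi}:=\sum_{n\in\bN}\lambda_n\langle\xi,\phi_n(x)\rangle_2\,\phi_n$, whose $\cA$-coefficients $c_n=\lambda_n\langle\xi,\phi_n(x)\rangle_2$ satisfy $\sum_n|c_n|^2/\lambda_n=\langle K(x,x)\xi,\xi\rangle_2<\infty$; hence $g_{x,\xi}\in\cA$. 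For any $f_c\in\cA$,
$$
\langle f_c,g_{x,\xi}\rangle_\cA=\sum_{n\in\bN}\frac{c_n\,\lambda_n\langle\xi,\phi_n(x)\rangle_2}{\lambda_n}=\Bigl\langle\sum_{n\in\bN}c_n\phi_n(x),\,\xi\Bigr\rangle_2=\langle f_c(x),\xi\rangle_2 ,
$$
which is precisely the reproducing identity (\ref{riesz-represent-theorem}). Therefore the reproducing kernel of $\cA$ is $K$, and by uniqueness $\cA=\cH_K$ with $\langle\cdot,\cdot\rangle_\cA=\langle\cdot,\cdot\rangle_K$, which establishes the second equality of the theorem.

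It remains to match $\cA$ with $L_K^{1/2}(L^2_\rho(X,Y))$ and to prove the norm identity, both by a direct computation with Definition~\ref{square-root-operator}. For $f\in L^2_\rho(X,Y)$, write $f=P_\Phi f+(f-P_\Phi f)$ with $P_\Phi f=\sum_j a_j\phi_j$, $a_j=(f,\phi_j)_\rho$, and $f-P_\Phi f\perp\phi_n$ for every $n$; then $L_K^{1/2}f=\sum_j\lambda_j^{1/2}a_j\phi_j=f_c$ with $c_j=\lambda_j^{1/2}a_j$, and $\sum_j|c_j|^2/\lambda_j=\sum_j|a_j|^2=\|P_\Phi f\|_\rho^2<\infty$, so $L_K^{1/2}f\in\cA$. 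Conversely, $f_c=L_K^{1/2}\bigl(\sum_j\lambda_j^{-1/2}c_j\phi_j\bigr)$ with $\sum_j\lambda_j^{-1/2}c_j\phi_j\in L^2_\rho(X,Y)$, so $\cA\subseteq L_K^{1/2}(L^2_\rho(X,Y))$. Together with the previous step this gives $\cH_K=\cA=L_K^{1/2}(L^2_\rho(X,Y))$, and since the $\cA$- and $\cH_K$-norms coincide,
$$
\|L_K^{1/2}f\|_K^2=\|f_c\|_\cA^2=\sum_{j\in\bN}\frac{|c_j|^2}{\lambda_j}=\sum_{j\in\bN}|a_j|^2=\|P_\Phi f\|_\rho^2 .
$$
The only genuinely delicate point is the one in the second paragraph: controlling the series $\sum_j c_j\phi_j$ pointwise and uniformly in $x$, and confirming that its pointwise limit represents the same element as the $L^2_\rho$-limit of the partial sums. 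Once this is in place, together with the routine transpose bookkeeping needed to read Mercer's expansion (\ref{mercereq}) against the reproducing identity (\ref{riesz-represent-theorem}), the remainder is standard Hilbert-space algebra.
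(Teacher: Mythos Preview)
Your proof is correct and follows essentially the same approach as the paper's: define the weighted coefficient space, show it is a vector-valued RKHS with reproducing kernel $K$ via the Mercer expansion and a Cauchy--Schwarz bound on point evaluations, invoke uniqueness to get $\cA=\cH_K$, and then read off $\cA=L_K^{1/2}(L_\rho^2(X,Y))$ and the norm identity from Definition~\ref{square-root-operator}. The only cosmetic differences are that the paper bounds $\|f_c(x)\|_2$ by $\sqrt{\|K(x,x)\|_2}\,\|f_c\|_\cA$ directly (rather than via the trace) and treats $L_K^{1/2}(L_\rho^2(X,Y))$ first before verifying the RKHS axioms; you are slightly more careful than the paper in checking injectivity of $c\mapsto f_c$ and the agreement of pointwise and $L^2_\rho$ limits, and in explicitly proving the reverse inclusion $\cA\subseteq L_K^{1/2}(L^2_\rho(X,Y))$.
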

\begin{proof}
	We decompose each $f\in L_{\rho}^2(X,Y)$ as
	$$
	f=\sum_{j\in \bN} \tilde{c}_j \phi_{j}+g,
	$$
	where $\tilde{c}_j=\langle f,\phi_j \rangle_{\rho}$ and $g\bot\phi_{j}$ for each $j\in \bN$. By the definition of the square-root operator of $L_K$, we have
	$$
	L_K^{1/2}f=\sum_{j\in \bN}\lambda_{j}^{1/2}\tilde{c}_j\phi_{j}.
	$$
	We rewrite it as
   \begin{equation}\label{functionfc}
	L_K^{1/2}f=f_{c}=\sum_{j\in \bI}c_j\phi_{j},
	\end{equation}
	where $c_j=\lambda_{j}^{1/2}\tilde{c}_j$. Note that $(\tilde{c}_j:j\in \bN)\in \ell^2(\bN) $.  Therefore, every function in
	$$
	\cH:=L_K^{1/2}(L_{\rho}^2(X,Y))
	$$
has the form
$$
f_c=\sum_{j\in\bN}c_j\phi_j\mbox{ with }\sum_{j\in \bN}\frac{|c_j|^2}{\lambda_{j}}<+\infty.
$$

	We equip $\mathcal{H}$ with the norm
	$$
	\left\|f_c\right\|_{\mathcal{H}}:=\biggl(\sum_{j\in \bN}\frac{|c_j|^2}{\lambda_{j}}\biggr)^{1/2}
	$$
	and inner product
	$$
	\langle f_c,f_{c'}\rangle _{\mathcal{H}} :=\sum_{j\in \bN}\frac{c_j c'_j}{\lambda_j}.
	$$
We shall show that $\cH_K$ is a vector-valued RKHS and $K$ happens to be its reproducing kernel. First notice for each $f\in L_{\rho}^2(X,Y)$,
	$$
		\left\|L_K^{1/2}(f)\right\|_{\mathcal{H}}=\sum_{j\in \bN}\left(\frac{|c_j|^2 \lambda_j}{\lambda_j}\right)^{\frac{1}{2}} =\bigl(\sum_{j\in \bN}|c_j|^2\bigr)^{\frac{1}{2}}=\bigl\|\sum_{j\in \bN}c_j\phi_j\bigr\|_{\rho}=\bigl\|P_{\Phi}f\bigr\|_{\rho} .
	$$
	Since the linear mapping $T:\mathcal{H} \to \ell^2(\bN)$ given by
	$$
	T(f_c):=(c_j/\lambda_{j}^{1/2}:j\in \bN),
	$$
	preserves norms, $\mathcal{H}$ is isomorphic to $\ell^2(\bN)$ and is hence a Hilbert space. We next prove that point evaluations are continuous on $\cH$ and $K$ is its reproducing kernel. Using the matrix norm induced by the vector norm $\|\cdot\|_2$, we see
	$$
	\begin{aligned}
	\left\| f_c(x)\right\|_2&= \biggl\| \sum_{j\in \bN}\frac{c_j}{\sqrt{\lambda_{j}}}\sqrt{\lambda_j} \phi_{j}(x)\biggr\|_2 \\
	&=\max\limits_{\ca\in Y
		\atop \mathop{\left\|\ca\right\|_2=1}}  \sum_{j\in \bN}\frac{c_j}{\sqrt{\lambda_{j}}}\sqrt{\lambda_j}\ca^{{T}} \phi_{j}(x) \\
	& \leq \max\limits_{\ca\in Y
		\atop \mathop{\left\|\ca\right\|_2=1}} \biggl(\sum_{j\in \bN}\frac{c^2_j}{\lambda_j}\biggr)^{1/2}  \biggl( \sum_{j\in \bN}\lambda_{j}|\ca^{{T}} \phi_{j}(x)|^2  \biggr)^{1/2} \\
	&= \biggl(\sum_{j\in \bN}\frac{c^2_j}{\lambda_j}\biggr)^{1/2}  \max\limits_{\ca\in Y
		\atop \mathop{\left\|\ca\right\|_2=1}} \biggl(\ca^{{T}}(\sum_{j\in \bN}\lambda_j \phi_{j}(x)\phi^{{T}}_{j}(x) )\ca \biggr)^{1/2} \\
	&=\biggl(\sum_{j\in \bN}\frac{c^2_j}{\lambda_j}\biggr)^{1/2} \biggl(\max\limits_{\ca\in Y
		\atop \mathop{\left\|\ca\right\|_2=1}} \ca^{T} K(x,x)\ca\biggr)^{1/2} \\
	&=\left\| f_c\right\|_{\mathcal{H}}  \sqrt{\left\|K(x,x)\right\|_2}
	\end{aligned}
	$$
	Thus, $\mathcal{H}$ is a vector-valued RKHS on $X$. Furthermore, for $\ba\in Y$
	$$
	K(x,\cdot)\ca=\sum_{j\in \bN}\lambda_{j}\phi_{j}(\cdot)\phi^T_j(x)\ca=\sum_{j\in \bN}u_j \phi_j:=f_u \in \mathcal{H},
	$$
	where $u_j=\lambda_{j}\phi_{j}^{{T}}(x)\ca $. It follows that $K(x,\cdot)\ca\in  \mathcal{H}$ for all $\ba\in Y$. Finally,  for all $ f_c\in \cH $, it holds
	$$
	\langle f_c,K(x,\cdot)\ca \rangle_{\mathcal{H}} =\sum_{j\in \bN}\frac{c_j u_j}{\lambda_{j}}=\sum_{j\in \bN}c_j \phi^{{T}}_{j}(x)\ca=\ca^{T} f_c(x),
	$$
	which verifies that $K$ is the reproducing kernel of $\mathcal{H}$. We conclude that $\mathcal{H}=\mathcal{H}_K$ as desired.
\end{proof}

The above result can be simplified if $K$ is also universal.
\begin{defn}
	{\bf     (universal kernels \cite{CMPY,CDTU,MXZ})}   A continuous matrix-valued reproducing kernel $K$ on $X$ is called {\bf universal } if
  $$
  \overline{span}\{K(x,\cdot)\cc :\cc\in Y ,x\in X\}=C(X,Y) ,
  $$
  where the closure is taken with respect to the norm on $C(X,Y)$. In other words, $K$ is universal if the linear span of $\{K(x,\cdot)\cc:\ x\in X,\ \cc\in Y\}$ is dense in $C(X,Y)$.
\end{defn}

 We have the following important corollary to Theorem \ref{Mercer-therom-result}.
\begin{coro}\label{universal-kernel}
	Assume the conditions in Theorem \ref{Mercer-therom-result}. If $K$ is a universal kernel on $X$ then
	$$
	\cH_{K}= L_K^{1/2}(L_{\rho}^2(X,Y))
	$$
	and $$
	\left\|L_K^{1/2}(f)\right\|_{K}=\left\|f \right\|_{\rho}, \ \ \forall f\in L_{\rho}^2(X,Y).
	$$
\end{coro}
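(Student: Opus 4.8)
The plan is to notice that the set identity $\cH_K = L_K^{1/2}(L^2_\rho(X,Y))$ is already furnished by Theorem \ref{Mercer-therom-result}, so the only new content of the corollary is to replace the norm formula $\|L_K^{1/2}(f)\|_K = \|P_\Phi f\|_\rho$ proved there by $\|L_K^{1/2}(f)\|_K = \|f\|_\rho$. Since $\|P_\Phi f\|_\rho = \|f\|_\rho$ for every $f\in L^2_\rho(X,Y)$ exactly when the orthogonal projection $P_\Phi$ onto $\overline{\span}\{\phi_n:n\in\bN\}$ equals the identity, i.e. exactly when $\{\phi_n:n\in\bN\}$ is complete in $L^2_\rho(X,Y)$, the whole proof reduces to establishing that completeness under the universality hypothesis. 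By (\ref{orthogonalg}), every $g$ orthogonal to all $\phi_n$ lies in $\ker L_K$; conversely, if $L_K g=0$ then self-adjointness gives $\lambda_n\langle g,\phi_n\rangle_\rho=\langle g,L_K\phi_n\rangle_\rho=\langle L_Kg,\phi_n\rangle_\rho=0$, so $\langle g,\phi_n\rangle_\rho=0$ for all $n$ because $\lambda_n>0$. Hence $\ker L_K=(\overline{\span}\{\phi_n\})^\perp$, and completeness of $\{\phi_n\}$ is equivalent to $\ker L_K=\{0\}$. So I would aim to prove that $L_K$ is injective whenever $K$ is universal.

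To that end I would take $g\in L^2_\rho(X,Y)$ with $L_Kg=0$ and extract orthogonality to the kernel sections. Writing $K(\cdot,x)\xi$ for the function $x'\mapsto K(x',x)\xi\in C(X,Y)\subseteq L^2_\rho(X,Y)$, the symmetry $K(x',x)=K(x,x')^T$ together with Fubini's theorem gives, for every $x\in X$ and $\xi\in Y$,
\[
\langle g,K(\cdot,x)\xi\rangle_\rho=\int_X g(x')^T K(x,x')^T\xi\, d\rho_X(x')=\Bigl\langle \int_X K(x,x')g(x')\, d\rho_X(x'),\ \xi\Bigr\rangle_2=\langle (L_Kg)(x),\xi\rangle_2=0 .
\]
Thus $g$ is $L^2_\rho$-orthogonal to the linear span of $\{K(\cdot,x)\xi:x\in X,\ \xi\in Y\}$.

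Finally I would bridge the two topologies. Universality of $K$ says this span is dense in $C(X,Y)$ for the uniform norm; since $\rho_X$ is a probability measure on the compact space $X$ one has $\|h\|_\rho\le\sup_{x\in X}\|h(x)\|_2$, so the span is also $\|\cdot\|_\rho$-dense in $C(X,Y)$; and $C(X,Y)$ is $\|\cdot\|_\rho$-dense in $L^2_\rho(X,Y)$ by the standard regularity argument (indicator functions of Borel sets, hence simple functions, are approximable in $L^2_\rho$ by continuous functions, coordinatewise). Composing these three density statements, $\span\{K(\cdot,x)\xi\}$ is dense in $L^2_\rho(X,Y)$, so $g$ is orthogonal to a dense subspace and therefore $g=0$. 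Hence $\ker L_K=\{0\}$, $P_\Phi=I$, and substitution into Theorem \ref{Mercer-therom-result} yields both assertions. I expect the main obstacle to be precisely this passage from universality, which is a statement about the uniform topology on $C(X,Y)$, to density in $L^2_\rho(X,Y)$: the two ingredients that make it go through — domination of the $\|\cdot\|_\rho$ norm by the uniform norm, and density of $C(X,Y)$ in $L^2_\rho(X,Y)$ — are where the compactness of $X$ and the finiteness (and non-degeneracy) of $\rho_X$ are used, and one must be careful not to conflate the two notions of density.
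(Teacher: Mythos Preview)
Your proposal is correct and follows essentially the same route as the paper: show that universality forces the span of kernel sections to be dense in $L^2_\rho(X,Y)$, hence $\ker L_K=\{0\}$, hence $P_\Phi=I$, and then invoke Theorem~\ref{Mercer-therom-result}. The paper's proof compresses all of this into three sentences, simply asserting that density in $C(X,Y)$ implies density in $L^2_\rho(X,Y)$ and that this kills the kernel of $L_K$; you have supplied exactly the details the paper omits (the orthogonality computation $\langle g,K(\cdot,x)\xi\rangle_\rho=\langle (L_Kg)(x),\xi\rangle_2$, the equivalence $\ker L_K=(\overline{\span}\{\phi_n\})^\perp$, and the passage from uniform to $L^2_\rho$ density), so your write-up is a fleshed-out version of the same argument rather than a different one. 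One small notational point: the paper writes $K(x,\cdot)\xi$ for what you denote $K(\cdot,x)\xi$, so if you intend to align with the paper you should switch to its convention.
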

\begin{proof}
	As the linear span of $\{K(x,\cdot)\cc:\ x\in X,\ \cc\in Y\}$ is dense in $C(X,Y)$, it is also dense in $L_{\rho}^2(X,Y)$. Therefore, there is no nontrivial function $g\in L^2_\rho(X,Y)$ such that $L_Kg=0$. Consequently, the orthogonal projection $P_\Phi$ is the identity operator on $L^2_\rho(X,Y)$. The result follows directly from Theorem \ref{Mercer-therom-result}.
\end{proof}

\section{Learning Rates of Multi-task Regularization Networks}
\setcounter{equation}{0}

We adopt the elegant idea in the classical paper \cite{SmaleZhou} to estimate the learning rate of multi-task regularization networks (\ref{mul-regnetworks}). In this section, we always assume that $X$ is a compact metric space, $\rho_X$ is non-degenerated on $X$, and $K$ is a continuous universal kernel on $X$.

\subsection{Error Decomposition}

By Theorem \ref{obvious-answer}, the minimizer of (\ref{mul-regnetworks}) is explicitly given by (\ref{representerequ}). To estimate the learning rate $\|f_{\bz,\lambda}-f_\rho\|_\rho$, we impose two more assumptions:
\begin{enumerate}
\item There exists some $\frac12<r\le 1$ such that $f_\rho\in L^r_K(L^2_\rho(X,Y))$.

\item The output data is almost surely bounded with respect to the probability measure $\rho$. Precisely, there exists some positive constant $M$ such that $\|y\|_\infty\le M$ almost surely on $Z=X\times Y$.
\end{enumerate}

The first assumption above ensures that $f_\rho\in \cH_K$.

\begin{prop}\label{f-rho-HK}
	If $ f_{\rho} \in L^{r}_K(L^{2}_{\rho}(X,Y))$  for some $r\ge \frac12$ then $f_\rho \in \mathcal{H}_K$.
\end{prop}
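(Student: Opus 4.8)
The plan is to use the characterization of $\cH_K$ given by Corollary~\ref{universal-kernel}, namely $\cH_K = L_K^{1/2}(L^2_\rho(X,Y))$, and show that $L^r_K(L^2_\rho(X,Y)) \subseteq L_K^{1/2}(L^2_\rho(X,Y))$ whenever $r \ge \tfrac12$. First I would write $f_\rho = L_K^r h$ for some $h \in L^2_\rho(X,Y)$, and expand $h$ along the orthonormal eigensystem as $h = \sum_{j\in\bN} \tilde c_j \phi_j + g$ with $g \perp \phi_j$ for all $j$. By the definition of $L_K^r$ (Definition~\ref{square-root-operator}) we get $f_\rho = \sum_{j\in\bN} \lambda_j^r \tilde c_j \phi_j$. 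The goal is to exhibit some $u \in L^2_\rho(X,Y)$ with $L_K^{1/2} u = f_\rho$, equivalently $u = \sum_{j\in\bN} \lambda_j^{r-1/2}\tilde c_j \phi_j$ (plus anything in the kernel of $L_K^{1/2}$).

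The key step is then to check that this candidate $u$ actually lies in $L^2_\rho(X,Y)$, i.e. that $\sum_{j\in\bN} \lambda_j^{2r-1}|\tilde c_j|^2 < +\infty$. Since $r \ge \tfrac12$ we have $2r - 1 \ge 0$, so $\lambda_j^{2r-1} \le \lambda_1^{2r-1}$ (using that $\lambda_1 = \max_j \lambda_j$ and $\lambda_j > 0$; if $\lambda_1 < 1$ one instead bounds $\lambda_j^{2r-1} \le 1$ when $2r-1 \le 0$, but here it is $\ge 0$, so the uniform bound is $\max\{1,\lambda_1\}^{2r-1}$ or simply $\lambda_1^{2r-1}$). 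Hence
$$
\sum_{j\in\bN}\lambda_j^{2r-1}|\tilde c_j|^2 \le \lambda_1^{2r-1}\sum_{j\in\bN}|\tilde c_j|^2 = \lambda_1^{2r-1}\|P_\Phi h\|_\rho^2 < +\infty,
$$
because $(\tilde c_j:j\in\bN) \in \ell^2(\bN)$. Therefore $u \in L^2_\rho(X,Y)$ and $f_\rho = L_K^{1/2}u \in L_K^{1/2}(L^2_\rho(X,Y)) = \cH_K$ by Corollary~\ref{universal-kernel}.

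I do not expect any serious obstacle here — the statement is essentially the monotonicity of the scale of spaces $L_K^r(L^2_\rho)$ in $r$, and the only thing to be careful about is the direction of the inequality $\lambda_j^{2r-1} \le \lambda_1^{2r-1}$, which requires $2r-1 \ge 0$, i.e. exactly the hypothesis $r \ge \tfrac12$; for $r < \tfrac12$ the exponent $r - \tfrac12$ would be negative and, since $\lambda_j \to 0$, the coefficients $\lambda_j^{r-1/2}\tilde c_j$ could fail to be square-summable, so the threshold $\tfrac12$ is sharp. One could alternatively phrase the whole argument abstractly via the operator identity $L_K^r = L_K^{1/2} L_K^{r-1/2}$ on $\overline{\span}\{\phi_j\}$, with $L_K^{r-1/2}$ bounded on $L^2_\rho(X,Y)$ precisely when $r \ge \tfrac12$, but the coordinate computation above is the most transparent.
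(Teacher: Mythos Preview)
Your proposal is correct and follows essentially the same route as the paper: write $f_\rho = L_K^r h$, expand $h$ in the eigenbasis $\{\phi_j\}$, obtain $f_\rho = \sum_j \lambda_j^r \tilde c_j \phi_j$, and then use $2r-1\ge 0$ together with the boundedness of $\{\lambda_j\}$ to control $\sum_j \lambda_j^{2r-1}|\tilde c_j|^2$. The only cosmetic difference is that the paper checks the membership criterion $\sum_j |c_j|^2/\lambda_j<\infty$ from Theorem~\ref{Mercer-therom-result} directly (invoking universality so that $\{\phi_j\}$ is a genuine orthonormal basis), whereas you exhibit the preimage $u=\sum_j \lambda_j^{r-1/2}\tilde c_j\phi_j$ under $L_K^{1/2}$ and appeal to Corollary~\ref{universal-kernel}; the underlying computation is identical.
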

\begin{proof}
	Let $g=L_K^{-r}f_\rho$. Then $g \in L_{\rho}^{2}(X,Y)$ and $f_\rho=L_{K}^{r}g$. Since $K$ is universal, the eigenfunctions $\{\phi_n:n\in\bN\}$ of $L_K$ constitute an orthonormal basis of $L^2_\rho(X,Y)$. We factor $g$ under the basis as
	$$
	g=\sum_{j=1}^{\infty}d_j\phi_j
	$$
where $d=(d_j:j\in\bN)\in\ell^2(\bN)$ satisfies
$$
\sum_{j=1}^\infty |d_j|^2=\|g\|_\rho^2<+\infty.
$$
 By Definition \ref{square-root-operator} of $L_K^r$,
	$$
	f_\rho=\sum_{j=1}^{\infty}d_j\lambda_{j}^{r}\phi_j.
	$$
	By Theorem \ref{Mercer-therom-result}, we let
		$$
		c_j:=d_j\lambda_{j}^{r},\ \ j\in\bN
		$$	
	and compute that
	$$
		\|f_\rho\|_K=\sum_{j=1}^{\infty}\frac{|c_j|^2}{\lambda_j}=\sum_{j=1}^\infty |d_j|^2\lambda_j^{2r-1}<+\infty,
		$$
where we use $2r-1\ge0$ and the boundedness of $\{\lambda_j:j\in\bN\}$. The above equation shows that $f_\rho\in \cH_K$.
\end{proof}

By the above proposition and Theorem \ref{obvious-answer}, both $f_{\bz,\lambda}$ and $f_\rho$ belong to $\cH_K$ under our assumptions. We hence desire to bound $\|f_{\bz,\lambda}-f_\rho\|_\rho$ by $\|f_{\bz,\lambda}-f_\rho\|_K$. This can be done by the reproducing property (\ref{riesz-represent-theorem}).

Set
\begin{equation}\label{kappa}
\kappa := \max_{x\in X \atop 1\leq i,j \leq n} \sqrt{|K_{ij}(x,x)|},
\end{equation}
where $K_{ij}$ is the $ij$-th component function of the matrix-valued kernel $K$. It holds
\begin{equation}\label{matrixnormofK}
\|K(x,x)\|_2\le m\kappa.
\end{equation}

We have the following simple observation.
\begin{prop}\label{boundbyKnorm}
It holds for all $f\in\cH_K$ that
\begin{equation}\label{boundbyKnormeq}
\|f\|_\infty\le \kappa \|f\|_K,
\end{equation}
where
$$
\|f\|_{\infty}:=\sup_{x\in X}\|f(x)\|_\infty.
$$
Consequently,
\begin{equation}\label{boundbyKnormeq2}
\|f_{\bz,\lambda}-f_\rho\|_\rho\le \kappa\sqrt{m} \|f_{\bz,\lambda}-f_\rho\|_K.
\end{equation}
\end{prop}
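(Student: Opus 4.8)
The plan is to establish the pointwise bound (\ref{boundbyKnormeq}) directly from the reproducing identity, and then obtain (\ref{boundbyKnormeq2}) by combining it with the elementary comparison $\|y\|_2\le\sqrt m\,\|y\|_\infty$ valid on $Y=\bR^m$.

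For (\ref{boundbyKnormeq}) I would fix $x\in X$ and an index $j\in\{1,\dots,m\}$ and test the reproducing identity (\ref{riesz-represent-theorem}) against the $j$-th standard coordinate vector $\be_j\in Y$: this gives $f(x)_j=\langle f(x),\be_j\rangle_2=\langle f,K(x,\cdot)\be_j\rangle_K$, so by Cauchy--Schwarz in $\cH_K$,
$$
|f(x)_j|\le\|f\|_K\,\|K(x,\cdot)\be_j\|_K .
$$
The norm $\|K(x,\cdot)\be_j\|_K$ is computed by the reproducing identity once more: $\|K(x,\cdot)\be_j\|_K^2=\langle K(x,x)\be_j,\be_j\rangle_2=K_{jj}(x,x)$, which is at most $\kappa^2$ by the definition (\ref{kappa}) of $\kappa$ (here $K_{jj}(x,x)\ge 0$ because $K(x,x)$ is positive-definite). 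Hence $|f(x)_j|\le\kappa\|f\|_K$ for every $x$ and every $j$; taking the maximum over $j$ and then the supremum over $x\in X$ yields $\|f\|_\infty\le\kappa\|f\|_K$.

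For (\ref{boundbyKnormeq2}), note first that $g:=f_{\bz,\lambda}-f_\rho$ lies in $\cH_K$, since $f_{\bz,\lambda}\in\cH_K$ by Theorem \ref{obvious-answer} and $f_\rho\in\cH_K$ by Proposition \ref{f-rho-HK}. For each $x\in X$ we have $\|g(x)\|_2^2=\sum_{j=1}^m|g(x)_j|^2\le m\|g(x)\|_\infty^2\le m\|g\|_\infty^2\le m\kappa^2\|g\|_K^2$, the last step being (\ref{boundbyKnormeq}) applied to $g$. Integrating this inequality against the probability measure $\rho_X$, whose total mass equals $1$, and using the definition of $\|\cdot\|_\rho$, gives $\|g\|_\rho^2\le m\kappa^2\|g\|_K^2$; taking square roots is exactly (\ref{boundbyKnormeq2}).

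I do not anticipate a genuine obstacle: the argument is short and self-contained. The one point worth flagging is that the already-recorded inequality (\ref{normofpoint}) only controls the Euclidean norm $\|f(x)\|_2$, so it is essential to test the reproducing identity against the individual coordinate vectors $\be_j$ rather than against $f(x)$ itself; this is what produces the sup-norm estimate with the constant $\kappa$ determined by the diagonal entries of $K(x,x)$.
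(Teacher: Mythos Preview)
Your proof is correct and follows essentially the same approach as the paper: test the reproducing identity against the coordinate vectors $\be_j$, apply Cauchy--Schwarz together with $\|K(x,\cdot)\be_j\|_K^2=K_{jj}(x,x)\le\kappa^2$, and then combine the resulting sup-norm bound with $\|y\|_2\le\sqrt{m}\,\|y\|_\infty$ and integrate against $\rho_X$. Your write-up is slightly more detailed (you make the Cauchy--Schwarz step and the membership $f_{\bz,\lambda}-f_\rho\in\cH_K$ explicit), but the argument is the same.
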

\begin{proof}
Let $e_j,1\le j\le m$ be the standard basis of $\bR^m$. By the reproducing property, for all $f\in\cH_K$ and $x\in X$, the $i$-th component $f(x)_i$ of $f(x)$ satisfies
$$
|f(x)_i|=\bigl|\langle f(x),e_i\rangle_2\bigr|=\bigl|\langle f,K(x,\cdot)e_i\rangle_K\bigr|\le \|f\|_K\biggl(\langle K(x,x)e_i,e_i\rangle_2\biggr)^{1/2}=\|f\|_K\sqrt{K_{ii}(x,x)}\le \kappa \|f\|_K,
$$
which proves (\ref{boundbyKnormeq}). Consequently,
$$
\begin{aligned}
\|f_{\bz,\lambda}-f_\rho\|_\rho&=\left(\int_X\|f_{\bz,\lambda}-f_\rho\|_2^2d\rho_X\right)^{1/2}\le \left(\int_Xm\|f_{\bz,\lambda}-f_\rho\|_\infty^2d\rho_X\right)^{1/2}\\
&=\sqrt{m}\|f_{\bz,\lambda}-f_\rho\|_\infty\le \kappa\sqrt{m} \|f_{\bz,\lambda}-f_\rho\|_K,
\end{aligned}
$$
which proves (\ref{boundbyKnormeq2}).
\end{proof}

Therefore, our question boils down to bounding the error $\|f_{\bz,\lambda}-f_\rho\|_K$ in $\cH_K$. This is factored into two parts by the triangle inequality:
\begin{equation}\label{error-analysis}
\left\|f_{\bz,\lambda}-f_{\rho}\right\|_{K} \leq \left\|f_{\bz,\lambda}-f_{\lambda}\right\|_{K} + \left\|f_{\lambda}-f_{\rho}\right\|_{K},
\end{equation}
where $f_\lambda$ comes from the data-free model
\begin{equation}\label{f-lambda-frame}
	f_{\lambda}:=\arg\min_{f\in \mathcal{H}_{K}}\left\|f-f_{\rho}\right\|_{\rho}^{2}+\lambda\left\|f\right\|_{K}^2.
	\end{equation}
The factorization (\ref{error-analysis}) is standard in the theory of Cucker, Smale and Zhou \cite{CuckerSmale,CZ07}. The first term and second term in the right-hand side of (\ref{error-analysis}) are called the {\bf sampling error} and {\bf approximation error}, respectively.

\subsection{Approximation Error}

We start with the approximation error. We shall first derive an expression of $f_\rho$. To this end, we point out that $L_K^{1/2}$ is also a self-adjoint operator from $\cH_K$ to $\cH_K$. By Theorem \ref{Mercer-therom-result}, $\cH_K=L^{1/2}_K(L^2_\rho(X,Y))$ and for each $f\in L^2_\rho(X,Y)$,
$$
\left\|L_K^{1/2}f\right\|_K=\left\|f\right\|_{\rho}.
$$
It follows
\begin{equation}\label{K-to-L}
\langle L_K^{1/2}f,L_K^{1/2}g\rangle_K=\langle f,g\rangle_{\rho}, \ \ f,g\in L_{\rho}^2(X,Y).
\end{equation}

\begin{prop}
	The minimizer $f_\lambda$ of (\ref{f-lambda-frame}) exists and is unique. Moreover,
	\begin{equation}\label{f-lambda}
		f_\lambda=(L_K+\lambda I)^{-1}L_K f_\rho.
	\end{equation}
	where $I$ is identity operator on $\cH_K$.
\end{prop}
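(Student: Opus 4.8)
The plan is to treat
$$
\cD(f):=\|f-f_\rho\|_\rho^2+\lambda\|f\|_K^2,\qquad f\in\cH_K,
$$
as a strictly convex, coercive quadratic functional on the Hilbert space $\cH_K$ and to locate its unique minimizer by completing the square inside $\cH_K$. Observe first that $f_\rho\in\cH_K$ by Proposition~\ref{f-rho-HK}, so every term below is a legitimate element of $\cH_K$.

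The crucial preliminary step is to transport the $L^2_\rho$-part of $\cD$ into the geometry of $\cH_K$. I claim
$$
\langle u,v\rangle_\rho=\langle L_K u,v\rangle_K,\qquad u,v\in\cH_K .
$$
This can be read off directly from the Mercer expansion in Theorem~\ref{Mercer-therom-result}: writing $u=\sum_j u_j\phi_j$ and $v=\sum_j v_j\phi_j$ with $\sum_j u_j^2/\lambda_j,\ \sum_j v_j^2/\lambda_j<\infty$, one has $L_Ku=\sum_j\lambda_ju_j\phi_j\in\cH_K$ and both sides equal $\sum_j u_jv_j$ (the left by orthonormality of $\{\phi_j\}$ in $L^2_\rho(X,Y)$, which under the universality assumption spans $L^2_\rho(X,Y)$; the right by the formula for $\langle\cdot,\cdot\rangle_K$). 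Alternatively it follows from (\ref{K-to-L}) together with the self-adjointness of $L_K^{1/2}$ on $L^2_\rho(X,Y)$. The same Mercer computation shows that $L_K$ restricted to $\cH_K$ is a bounded, self-adjoint, positive operator, so $T:=L_K+\lambda I$ satisfies $\langle Tf,f\rangle_K\ge\lambda\|f\|_K^2$ and is boundedly invertible on $\cH_K$ for every $\lambda>0$.

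Granting this, expand using $\|f-f_\rho\|_\rho^2=\langle L_K(f-f_\rho),f-f_\rho\rangle_K$ and self-adjointness of $L_K$ on $\cH_K$:
$$
\cD(f)=\langle Tf,f\rangle_K-2\langle L_Kf_\rho,f\rangle_K+\langle L_Kf_\rho,f_\rho\rangle_K .
$$
Set $g_\lambda:=T^{-1}L_Kf_\rho=(L_K+\lambda I)^{-1}L_Kf_\rho$, so that $L_Kf_\rho=Tg_\lambda$; completing the square gives
$$
\cD(f)=\langle T(f-g_\lambda),f-g_\lambda\rangle_K+\bigl(\langle L_Kf_\rho,f_\rho\rangle_K-\langle Tg_\lambda,g_\lambda\rangle_K\bigr),
$$
where the parenthesized constant is independent of $f$. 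Since $\langle T(f-g_\lambda),f-g_\lambda\rangle_K\ge\lambda\|f-g_\lambda\|_K^2\ge0$ with equality precisely when $f=g_\lambda$, the functional $\cD$ has a unique minimizer, namely $f_\lambda=g_\lambda=(L_K+\lambda I)^{-1}L_Kf_\rho$, which is (\ref{f-lambda}); existence and uniqueness are obtained simultaneously. (One could instead argue variationally: the Fréchet derivative satisfies $\tfrac12\,\cD'(f)h=\langle L_K(f-f_\rho)+\lambda f,\,h\rangle_K$ for $h\in\cH_K$, and setting it to zero for all $h$ yields $(L_K+\lambda I)f=L_Kf_\rho$.)

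The step requiring the most care is the transition from $L^2_\rho(X,Y)$ to $\cH_K$: one must check that $L_K$ genuinely maps $\cH_K$ into $\cH_K$ and acts there as a bounded positive self-adjoint operator obeying the identity $\langle u,v\rangle_\rho=\langle L_Ku,v\rangle_K$, as opposed to being merely the compact operator on $L^2_\rho(X,Y)$ introduced in Section~3. Once this is in place, the remainder is the standard Hilbert-space minimization of a shifted positive quadratic form.
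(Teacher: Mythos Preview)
Your proof is correct. Both arguments hinge on the same identity $\langle u,v\rangle_\rho=\langle L_Ku,v\rangle_K$ for $u,v\in\cH_K$ (which the paper records as equation~(\ref{K-to-L}) and then uses exactly as you do). The difference is in how the quadratic is minimized: the paper takes the variational route, setting the directional derivative $F'(0)$ to zero to obtain the Euler equation $(L_K+\lambda I)f_\lambda=L_Kf_\rho$ and then remarking that this necessary condition is also sufficient, whereas you complete the square directly in $\cH_K$. Your approach has the small advantage of delivering existence, uniqueness, and the explicit formula simultaneously, without the separate observation that the first-order condition is sufficient; indeed, the variational alternative you sketch in your final parenthesis is precisely the paper's argument.
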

\begin{proof} Suppose $f_\lambda$ is a minimizer of (\ref{f-lambda-frame}). Then for every $f\in \cH_K$, the following function
$$
F(t)=\left\|f_\lambda+tf-f_{\rho}\right\|_{\rho}^{2}+\lambda\left\|f_\lambda +tf\right\|_{K}^2.
$$
attains its minimum at $t=0$. We compute
$$
F'(0)=2\langle f_\lambda-f_\rho,f \rangle_\rho +2\lambda \langle f_\lambda,f\rangle_K=0.
$$
By (\ref{K-to-L}),
$$
\begin{aligned}
0&=\langle L_{K}^{1/2}(f_\lambda -f_\rho), L_{K}^{1/2}f \rangle_K+\lambda \langle f_\lambda,f\rangle_K\\
&=\langle L_{K}(f_\lambda -f_\rho), f \rangle_K+\lambda \langle f_\lambda,f\rangle_K\\
&=\langle L_K(f_\lambda -f_\rho)+\lambda f_\lambda,f\rangle _K, \ \forall f\in\cH_K.
\end{aligned}
$$
It can be seen that the above condition is also sufficient for $f_\rho$ to be a minimizer of (\ref{f-lambda-frame}). Thus, the minimizer is uniquely given by
$$
f_\lambda=(L_K+\lambda I)^{-1}L_K f_\rho.
$$
\end{proof}

Similar arguments to those in \cite{SmaleZhou2} prove the following result on the approximation error.

\begin{theorem}\label{approximation-theorem}
   If $ f_{\rho} \in L^{r}_K(L^{2}_{\rho}(X,Y))$ for some $\frac12<r\le 1$ then
 \begin{equation}\label{approximation-inequation2}
     \left\|f_\lambda-f_\rho \right\|_K \leq \lambda^{r-1/2}\left\|L_{K}^{-r}f_\rho\right\|_\rho, \ \ \frac{1}{2} < r \leq 1.
 \end{equation}
\end{theorem}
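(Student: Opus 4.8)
The plan is to start from the closed form $f_\lambda=(L_K+\lambda I)^{-1}L_Kf_\rho$ established just above the statement and rewrite the error using the algebraic identity $(L_K+\lambda I)^{-1}L_K-I=-\lambda(L_K+\lambda I)^{-1}$, so that
$$
f_\lambda-f_\rho=-\lambda(L_K+\lambda I)^{-1}f_\rho .
$$
Using the hypothesis I would then set $g:=L_K^{-r}f_\rho\in L^2_\rho(X,Y)$, so that $f_\rho=L_K^rg$ and $\|g\|_\rho=\|L_K^{-r}f_\rho\|_\rho$, and expand $g=\sum_{j\in\bN}d_j\phi_j$ in the orthonormal basis $\{\phi_j:j\in\bN\}$ of $L^2_\rho(X,Y)$ (which is a basis because $K$ is universal, as noted in the proof of Proposition \ref{f-rho-HK}), with $\sum_{j\in\bN}|d_j|^2=\|g\|_\rho^2<+\infty$.

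By Definition \ref{square-root-operator} and the spectral calculus for $L_K$, this gives $f_\lambda-f_\rho=-\lambda\sum_{j\in\bN}\frac{d_j\lambda_j^r}{\lambda_j+\lambda}\phi_j$, i.e. the error has eigen-coefficients $c_j=-\lambda d_j\lambda_j^{r}/(\lambda_j+\lambda)$ in the sense of Theorem \ref{Mercer-therom-result}. Since $f_\rho\in\cH_K$ by Proposition \ref{f-rho-HK} and $f_\lambda\in\cH_K$ trivially, Theorem \ref{Mercer-therom-result} applies and yields
$$
\|f_\lambda-f_\rho\|_K^2=\sum_{j\in\bN}\frac{|c_j|^2}{\lambda_j}=\sum_{j\in\bN}|d_j|^2\,\lambda_j^{2r-1}\,\frac{\lambda^2}{(\lambda_j+\lambda)^2}.
$$

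The crux is then the elementary per-term bound $\dfrac{\lambda^2\lambda_j^{2r-1}}{(\lambda_j+\lambda)^2}\le\lambda^{2r-1}$, equivalently $\lambda^{3-2r}\lambda_j^{2r-1}\le(\lambda_j+\lambda)^2$. This is exactly where the two-sided restriction $\tfrac12\le r\le 1$ is used: both exponents $3-2r$ and $2r-1$ are nonnegative and sum to $2$, so $\lambda^{3-2r}\lambda_j^{2r-1}\le\max\{\lambda,\lambda_j\}^{3-2r}\max\{\lambda,\lambda_j\}^{2r-1}=\max\{\lambda,\lambda_j\}^2\le(\lambda+\lambda_j)^2$. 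Substituting back gives $\|f_\lambda-f_\rho\|_K^2\le\lambda^{2r-1}\sum_{j\in\bN}|d_j|^2=\lambda^{2r-1}\|L_K^{-r}f_\rho\|_\rho^2$, and taking square roots yields (\ref{approximation-inequation2}).

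I do not expect a genuine obstacle here; the only points needing care are (i) confirming that $f_\lambda-f_\rho$ lies in $\cH_K$ and admits the eigenfunction expansion to which Theorem \ref{Mercer-therom-result} applies — which is automatic under universality, since then $P_\Phi$ is the identity on $L^2_\rho(X,Y)$ and $\{\phi_j\}$ spans it — and (ii) bookkeeping the exponents so that both $r\ge\tfrac12$ and $r\le1$ are invoked. A slightly slicker packaging, which I would mention, is to write $f_\lambda-f_\rho=-\lambda L_K^{1/2}\bigl((L_K+\lambda I)^{-1}L_K^{r-1/2}g\bigr)$, apply the isometry $\|L_K^{1/2}h\|_K=\|h\|_\rho$ from Corollary \ref{universal-kernel}, and bound the spectral norm of $\lambda(L_K+\lambda I)^{-1}L_K^{r-1/2}$ on $L^2_\rho(X,Y)$ by $\lambda^{r-1/2}$ using the same $\max\{\lambda,\lambda_j\}$ estimate; this avoids the explicit coefficient computation altogether.
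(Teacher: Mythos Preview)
Your argument is correct and is precisely the standard spectral/eigenfunction-expansion proof from \cite{SmaleZhou2} that the paper defers to (the paper gives no proof of its own here, only the remark ``Similar arguments to those in \cite{SmaleZhou2} prove the following result''). The key identity $f_\lambda-f_\rho=-\lambda(L_K+\lambda I)^{-1}f_\rho$, the expansion in the $\phi_j$, and the scalar bound $\lambda^{3-2r}\lambda_j^{2r-1}\le(\lambda_j+\lambda)^2$ for $\tfrac12\le r\le 1$ are exactly the ingredients used there, so nothing is missing.
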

\subsection{Sampling Error}

The sampling error will be estimated by the well-known Bennett inequality for vector-valued random variables.
\begin{lemma}\label{Bennett-corollary}\cite{Pinelis,SmaleZhou}
	Let $H$ be a Hilbert space and $\xi \in H$ be a random variable on $(Z,\rho)$. Suppose $\left\|\xi\right\|_H \leq \tilde{M}< \infty$ almost surely. Set $\sigma^{2}(\xi)=E(\left\|\xi\right\|_H^2)$. Given i.i.d. samples $\{{\xi_i}\}_{i=1}^{n}$ of $\xi$, for all $0<\delta<1$, with confidence $1-\delta$, it holds
	$$
	\left\|\frac{1}{n}\sum_{i=1}^{n}\xi_{i}-E(\xi)\right\|_H \leq \frac{2\tilde{M}log(2/\delta)}{n}+\sqrt{\frac{2\sigma^{2}(\xi)log(2/\delta)}{n}}.
	$$
\end{lemma}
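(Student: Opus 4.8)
The plan is to derive the lemma from Pinelis's Bernstein-type inequality for Hilbert space--valued random variables \cite{Pinelis}, used as a black box: if $\eta_1,\dots,\eta_n$ are independent, mean-zero random variables with values in a Hilbert space $H$, with $\|\eta_i\|_H\le b$ almost surely for each $i$ and $\sum_{i=1}^n E(\|\eta_i\|_H^2)\le V$, then for every $t>0$
$$
\bP\Bigl\{\bigl\|{\textstyle\sum_{i=1}^n\eta_i}\bigr\|_H\ge t\Bigr\}\le 2\exp\Bigl(-\frac{t^2}{2(V+bt/3)}\Bigr).
$$
I would apply this to the centered variables $\eta_i:=\xi_i-E(\xi)$, $1\le i\le n$, which are i.i.d.\ with mean zero. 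By Jensen's inequality $\|E(\xi)\|_H\le E(\|\xi\|_H)\le\tilde M$, so $\|\eta_i\|_H\le\|\xi_i\|_H+\|E(\xi)\|_H\le 2\tilde M$; moreover $\sum_{i=1}^n E(\|\eta_i\|_H^2)=n\bigl(\sigma^2(\xi)-\|E(\xi)\|_H^2\bigr)\le n\sigma^2(\xi)$. Writing $\bar\xi:=\frac1n\sum_{i=1}^n\xi_i-E(\xi)=\frac1n\sum_{i=1}^n\eta_i$ and taking $t=n\varepsilon$, the displayed inequality yields, for every $\varepsilon>0$,
$$
\bP\bigl\{\|\bar\xi\|_H\ge\varepsilon\bigr\}\le 2\exp\Bigl(-\frac{n\varepsilon^2}{2\bigl(\sigma^2(\xi)+2\tilde M\varepsilon/3\bigr)}\Bigr).
$$

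The remaining step is to invert this tail bound. Its right-hand side is a strictly decreasing function of $\varepsilon>0$, so it is enough to find the value $\varepsilon^\ast$ at which it equals $\delta$. With $L:=\log(2/\delta)$, that value is the positive root of the quadratic $n\varepsilon^2-\frac43\tilde M L\,\varepsilon-2\sigma^2(\xi)L=0$, and the quadratic formula together with $\sqrt{a+b}\le\sqrt a+\sqrt b$ (for $a,b\ge0$) and $\frac43\le2$ gives
$$
\varepsilon^\ast=\frac{\frac43\tilde M L+\sqrt{\frac{16}{9}\tilde M^2L^2+8n\sigma^2(\xi)L}}{2n}\le\frac{4\tilde M L}{3n}+\sqrt{\frac{2\sigma^2(\xi)L}{n}}\le\frac{2\tilde M\log(2/\delta)}{n}+\sqrt{\frac{2\sigma^2(\xi)\log(2/\delta)}{n}}.
$$
Hence $\bP\{\|\bar\xi\|_H\ge\varepsilon^\ast\}\le\delta$; since $\varepsilon^\ast$ is dominated by the quantity on the far right, the event that $\|\bar\xi\|_H$ is at most that quantity occurs with probability at least $1-\delta$, which is the assertion of the lemma.

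The only substantive ingredient is the vector-valued exponential tail bound itself, i.e.\ the passage from the classical scalar Bennett/Bernstein inequality to random variables valued in a general Hilbert space; this is exactly Pinelis's theorem, and I invoke it directly rather than reprove it. Everything afterward is routine: bounding the norm and second moment of the centered summands, and solving a single quadratic inequality with the crude splitting $\sqrt{a+b}\le\sqrt a+\sqrt b$, which is what produces the two clean summands in the statement. A minor variant of the first step is to start from the sharp Bennett form involving $\psi(u)=(1+u)\log(1+u)-u$ and use the relaxation $\psi(u)\ge\frac{u^2}{2(1+u/3)}$; this recovers the same Bernstein-type bound, after which the same computation applies. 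I do not anticipate a genuine obstacle; the one thing to track carefully is the factor introduced by centering, which is precisely what turns $\tilde M$ into $2\tilde M$ in the final estimate.
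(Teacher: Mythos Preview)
The paper does not prove this lemma at all: it is stated with the citations \cite{Pinelis,SmaleZhou} and used as a black box, so there is no ``paper's own proof'' to compare against. Your derivation is therefore not a comparison target but an independent justification, and it is correct. The centering step ($\|\eta_i\|_H\le 2\tilde M$, $E\|\eta_i\|_H^2\le\sigma^2(\xi)$), the application of Pinelis's Bernstein-type tail bound, and the inversion via the quadratic formula with $\sqrt{a+b}\le\sqrt a+\sqrt b$ all check out and land exactly on the stated constants. This is in fact the standard route by which the Smale--Zhou formulation is extracted from Pinelis's theorem, so your approach matches what the cited references do even though the present paper omits the argument.
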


By (\ref{representerequ}) and (\ref{f-lambda}), we decompose the sampling error as

\begin{equation}\label{sampling-error}
f_{\bz,\lambda}-f_{\lambda}=\left( \frac{1}{n}S_{\bx}^{*}S_{\bx}+\lambda I \right)^{-1} \left(\frac{1}{n}S_{\bx}^{*}y-\frac{1}{n}S_{\bx}^{*}S_{\bx}f_{\lambda}-\lambda f_{\lambda} \right).
\end{equation}
Observe
$$
\frac{1}{n}S_{\bx}^{*}y-\frac{1}{n}S_{\bx}^{*}S_{\bx}f_{\lambda}=\frac{1}{n}\sum_{i=1}^{n}K(x_i,\cdot)(y_i-f_{\lambda}(x_{i})).
$$
By (\ref{f-lambda}),
$$
\lambda f_{\lambda}=L_{K}(f_{\rho}-f_{\lambda}).
$$
Substituting the above equation into (\ref{sampling-error}), we obtain
\begin{equation}\label{sampling-error-scaling}
\left\|f_{\bz,\lambda}-f_{\lambda}\right\|_{K} \leq \frac{1}{\lambda}\left\|\frac{1}{n}\sum_{i=1}^{n}K(x_i,\cdot)(y_i-f_{\lambda}(x_{i}))-L_{K}(f_{\rho}-f_{\lambda})\right\|_{K}.
\end{equation}
We plan to bound the above quantity by Lemma \ref{Bennett-corollary}. To this end, we introduce the  vector-valued random variable
$$
\zeta(x,y) =K(x,\cdot)(y-f_{\lambda}(x)), \ \ x,y\in X\times Y.
$$

Important properties of this random variable are as follows.

\begin{theorem}\label{randomvariable}
It holds
$$
E(\zeta)=L_{K}(f_{\rho}-f_{\lambda})
$$
and
\begin{equation}\label{bound1}
\|\zeta\|_K\le \tilde{M}:=m\kappa (M+\|f_\lambda\|_{\infty}),\ \mbox{ almost surely}.
\end{equation}

\end{theorem}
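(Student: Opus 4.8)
The plan is to verify the two assertions separately, starting with the expectation formula. The key observation is that for a fixed $x$, the map $y\mapsto K(x,\cdot)y$ is linear from $Y$ into $\cH_K$, and since $\rho_X$-almost everywhere $\int_Y y\,d\rho(y|x)=f_\rho(x)$, one can push the conditional expectation through this linear map. Thus I would write
\begin{equation}
E(\zeta)=\int_X\int_Y K(x,\cdot)(y-f_\lambda(x))\,d\rho(y|x)\,d\rho_X(x)=\int_X K(x,\cdot)(f_\rho(x)-f_\lambda(x))\,d\rho_X(x).
\end{equation}
The remaining task is to recognize the last integral as $L_K(f_\rho-f_\lambda)$. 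This needs a small justification: the integral operator $L_K$ acts on $L^2_\rho(X,Y)$ by $(L_K h)(x')=\int_X K(x',x)h(x)\,d\rho_X(x)$, and using the symmetry $K(x',x)=K(x,x')^T$ together with the reproducing identity (\ref{riesz-represent-theorem}), one checks that $\int_X K(x,\cdot)h(x)\,d\rho_X(x)$, viewed as an element of $\cH_K$, reproduces the same values as $L_K h$; hence the two coincide. Here $h=f_\rho-f_\lambda\in\cH_K\subseteq L^2_\rho(X,Y)$, so this is legitimate. One should also remark that the Bochner integral defining $E(\zeta)$ converges in $\cH_K$, which follows once the almost-sure bound (\ref{bound1}) is established.

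For the norm bound, I would compute directly using property 2 of $\cH_K$ and the reproducing identity. Fix $(x,y)\in X\times Y$; then
\begin{equation}
\|\zeta(x,y)\|_K=\|K(x,\cdot)(y-f_\lambda(x))\|_K=\Bigl(\langle K(x,x)(y-f_\lambda(x)),y-f_\lambda(x)\rangle_2\Bigr)^{1/2}\le\sqrt{\|K(x,x)\|_2}\,\|y-f_\lambda(x)\|_2,
\end{equation}
where the middle equality is just $\|K(x,\cdot)\xi\|_K^2=\langle K(x,\cdot)\xi,K(x,\cdot)\xi\rangle_K=\langle K(x,x)\xi,\xi\rangle_2$ from (\ref{riesz-represent-theorem}). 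Now I apply (\ref{matrixnormofK}) to get $\sqrt{\|K(x,x)\|_2}\le\sqrt{m\kappa}$, and I bound $\|y-f_\lambda(x)\|_2\le\sqrt{m}\,\|y-f_\lambda(x)\|_\infty\le\sqrt{m}(\|y\|_\infty+\|f_\lambda(x)\|_\infty)\le\sqrt{m}(M+\|f_\lambda\|_\infty)$, using the boundedness assumption $\|y\|_\infty\le M$ a.s. and the definition of $\|f_\lambda\|_\infty$. Multiplying the two factors gives $\|\zeta(x,y)\|_K\le\sqrt{m\kappa}\cdot\sqrt{m}(M+\|f_\lambda\|_\infty)=m\sqrt{\kappa}\,(M+\|f_\lambda\|_\infty)$ almost surely.

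I should flag one discrepancy to resolve while writing: the stated bound is $\tilde M=m\kappa(M+\|f_\lambda\|_\infty)$, whereas the computation above produces $m\sqrt{\kappa}(M+\|f_\lambda\|_\infty)$. Since $\kappa=\max\sqrt{|K_{ij}(x,x)|}$ in (\ref{kappa}), the genuine estimate $\sqrt{\|K(x,x)\|_2}\le\sqrt{m}\,\kappa$ (not $\sqrt{m\kappa}$) actually holds — indeed $\|K(x,x)\|_2\le m\max_{ij}|K_{ij}(x,x)|=m\kappa^2$ — which then yields exactly $\|\zeta\|_K\le m\kappa(M+\|f_\lambda\|_\infty)$, matching the claim; so I would simply use this sharper form of (\ref{matrixnormofK}). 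The main obstacle is thus not any deep estimate but the careful identification of $\int_X K(x,\cdot)(f_\rho-f_\lambda)(x)\,d\rho_X(x)$ with $L_K(f_\rho-f_\lambda)$ as elements of $\cH_K$ (as opposed to merely as functions), together with making sure the Bochner integral is well-defined; everything else is a routine chain of inequalities with $\|f_\lambda\|_\infty$ finite because $f_\lambda\in\cH_K$ and (\ref{boundbyKnormeq}) applies.
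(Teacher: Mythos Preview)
Your proposal is correct and follows essentially the same approach as the paper: compute $E(\zeta)$ by Fubini and the definition of $f_\rho$, then bound $\|\zeta\|_K^2=\langle K(x,x)(y-f_\lambda(x)),y-f_\lambda(x)\rangle_2$ via $\|K(x,x)\|_2$ and $\|y-f_\lambda(x)\|_2\le\sqrt{m}(M+\|f_\lambda\|_\infty)$. Your observation that the correct matrix bound is $\|K(x,x)\|_2\le m\kappa^2$ (rather than $m\kappa$ as stated in (\ref{matrixnormofK})) is exactly what the paper tacitly uses to reach $\|\zeta\|_K^2\le m^2\kappa^2(M+\|f_\lambda\|_\infty)^2$.
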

\begin{proof} We first compute the expectation of $\zeta$:
$$
\begin{aligned}	
E(\zeta) &=\int_{X}K(x,\cdot)\int_{Y}(y-f_{\lambda}(x))d\rho(y|x)d\rho_{X}(x) \\
&=\int_{X}K(x,\cdot)\int_{Y}yd\rho(y|x)d\rho_{X}(x)-\int_{X}K(x,\cdot)f_{\lambda}(x)d\rho_{X}(x)\\
& =\int_{X}K(x,\cdot)f_{\rho}(x)d\rho_{X}(x)-\int_{X}K(x,\cdot)f_{\lambda}(x)d\rho_{X}(x)\\
&=L_{K}(f_{\rho}-f_{\lambda}).
\end{aligned}
$$
Then by the reproducing property (\ref{riesz-represent-theorem}),
$$
\|\zeta\|^2_{K}=\langle K(x,x)(y-f_\lambda(x)),y-f_\lambda(x) \rangle _2  \le m^2\kappa^2 (M+\left\|f_\lambda\right\|_{\infty})^2,
$$
which proves (\ref{bound1}).
\end{proof}

To continue, we make a few more observations.

\begin{lemma}\label{observations}
There hold
\begin{equation}\label{f-lambda-inequation}
 \left\|f_{\lambda}\right\|_{K} \leq \frac{ \left\|f_{\rho}\right\|_{\rho}}{\sqrt{\lambda}} \leq \frac{\sqrt{m}M}{\sqrt{\lambda}} ,
\end{equation}

\begin{equation}\label{sampling-inequation2}
\cE(f_\lambda)=\int_{Z}\left\|f_{\lambda}(x)-y\right\|_{2}^{2}d\rho \leq 2mM^{2},
\end{equation}
and
\begin{equation}\label{f-lambda-estimate}
\left\|f_\lambda\right\|_{\infty} \leq \frac{\kappa \sqrt{m}M}{\sqrt{\lambda}}.
\end{equation}
\end{lemma}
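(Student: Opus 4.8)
The plan is to establish the three inequalities in Lemma \ref{observations} in order, using the variational characterization of $f_\lambda$, the expression \eqref{f-lambda}, the bound \eqref{normofpoint} from Section 2 together with \eqref{matrixnormofK}, and the almost-sure bound $\|y\|_\infty\le M$.

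\textbf{Step 1: proof of \eqref{f-lambda-inequation}.} Since $f_\lambda$ is the minimizer of \eqref{f-lambda-frame}, plugging in the competitor $f=0$ gives $\|f_\lambda-f_\rho\|_\rho^2+\lambda\|f_\lambda\|_K^2\le \|f_\rho\|_\rho^2$. Dropping the nonnegative term $\|f_\lambda-f_\rho\|_\rho^2$ yields $\lambda\|f_\lambda\|_K^2\le\|f_\rho\|_\rho^2$, hence $\|f_\lambda\|_K\le\|f_\rho\|_\rho/\sqrt\lambda$. For the second inequality I would bound $\|f_\rho\|_\rho$ by $\sqrt m M$: indeed $f_\rho(x)=\int_Y y\,d\rho(y|x)$, so each component of $f_\rho(x)$ has absolute value at most $M$ (a.s.), whence $\|f_\rho(x)\|_2\le\sqrt m\,\|f_\rho(x)\|_\infty\le \sqrt m M$, and integrating over $X$ gives $\|f_\rho\|_\rho\le\sqrt m M$.

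\textbf{Step 2: proof of \eqref{sampling-inequation2}.} By the decomposition \eqref{optimalrho}, $\cE(f_\lambda)=\|f_\lambda-f_\rho\|_\rho^2+\cE(f_\rho)$. From Step 1's first inequality (used with the full variational bound), $\|f_\lambda-f_\rho\|_\rho^2\le\|f_\rho\|_\rho^2\le mM^2$. For $\cE(f_\rho)$, note $\cE(f_\rho)\le\int_{X\times Y}\|y\|_2^2\,d\rho\le mM^2$ since $\|y\|_2^2\le m\|y\|_\infty^2\le mM^2$ a.s. Adding the two bounds gives $\cE(f_\lambda)\le 2mM^2$.

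\textbf{Step 3: proof of \eqref{f-lambda-estimate}.} Combine \eqref{normofpoint}, which says $\|f(x)\|_2\le\sqrt{\|K(x,x)\|_2}\,\|f\|_K$, with \eqref{matrixnormofK}, i.e. $\|K(x,x)\|_2\le m\kappa$. This gives $\|f_\lambda(x)\|_2\le\sqrt{m\kappa}\,\|f_\lambda\|_K$ — though to reach exactly the stated bound $\kappa\sqrt m M/\sqrt\lambda$ I would instead use the sharper pointwise estimate from Proposition \ref{boundbyKnorm}, namely $\|f_\lambda\|_\infty\le\kappa\|f_\lambda\|_K$, and then invoke \eqref{f-lambda-inequation} to get $\|f_\lambda\|_\infty\le\kappa\cdot\sqrt m M/\sqrt\lambda$.

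The only mild subtlety — the "main obstacle," though it is light — is keeping the two norms $\|\cdot\|_2$ and $\|\cdot\|_\infty$ on $Y=\bR^m$ straight and correctly inserting the factors of $\sqrt m$ (or $m$) coming from $\|y\|_2\le\sqrt m\|y\|_\infty$ and from \eqref{matrixnormofK}; once the right inequality from Section 2 is cited, each estimate is a one- or two-line computation. I would also double-check whether the intended route for \eqref{f-lambda-estimate} is via $\sqrt{\|K(x,x)\|_2}$ directly (giving a $\sqrt{m\kappa}$ factor) or via Proposition \ref{boundbyKnorm} (giving the stated $\kappa\sqrt m$); the latter matches the claimed constant.
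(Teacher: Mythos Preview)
Your proposal is correct and follows essentially the same route as the paper: the variational bound with competitor $f=0$ for \eqref{f-lambda-inequation}, the decomposition \eqref{optimalrho} together with $\cE(f_\rho)\le\int_Z\|y\|_2^2\,d\rho\le mM^2$ and $\|f_\lambda-f_\rho\|_\rho^2\le\|f_\rho\|_\rho^2$ for \eqref{sampling-inequation2}, and Proposition~\ref{boundbyKnorm} (i.e.\ \eqref{boundbyKnormeq}) combined with \eqref{f-lambda-inequation} for \eqref{f-lambda-estimate}. Your instinct in Step~3 is right: the paper uses \eqref{boundbyKnormeq} rather than \eqref{normofpoint}$+$\eqref{matrixnormofK}, which is what yields the constant $\kappa\sqrt{m}$ rather than $\sqrt{m\kappa}$.
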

\begin{proof}
Since $f_\lambda$ is the minimizer of model (\ref{f-lambda-frame}), by choosing $f=0$ in the model, we have
\begin{equation}\label{sampling-inequation3}
\left\|f_{\lambda}-f_{\rho}\right\|_{\rho}^{2}+\lambda\left\|f_{\lambda}\right\|_{K}^{2} \leq \left\|f_{\rho}\right\|_{\rho}^{2}.
\end{equation}
As $\|y\|_\infty\le M$ almost surely,
\begin{equation}\label{sampling-inequation4}
\left\|f_{\rho}\right\|_{\rho}\le \sqrt{m}\|f_\rho\|_\infty\le \sqrt{m}\sup_{x\in X}\int_Y \|y\|_\infty d\rho(y|x)\le \sqrt{m}M.
\end{equation}
Combining the above two equations proves (\ref{f-lambda-inequation}).

For the second inequality, we let $f=0$ in (\ref{optimalrho}) to get
$$
\int_{Z}\left\| y \right\|_{2}^{2}d\rho -\int_{Z}\left\|f_{\rho}(x)-y\right\|_{2}^{2}d\rho=\left\|f_{\rho}\right\|_{\rho}^{2}.
$$
Thus,
$$
\cE(f_\rho)\leq \int_{Z}\left\| y \right\|_{2}^{2}d\rho \leq mM^{2}.
$$
We then let $ f=f_{\lambda} $ in (\ref{optimalrho}) to have by (\ref{sampling-inequation3}) and (\ref{sampling-inequation4}) that
$$
\cE(f_\lambda)=\cE(f_\rho)+\|f_\lambda-f_\rho\|_\rho^2\le  mM^{2}+ \|f_\lambda-f_\rho\|_\rho^2\leq 2mM^{2}.
$$
Finally, by (\ref{boundbyKnormeq}),
$$
\left\|f_\lambda\right\|_{\infty} \leq \kappa \left\|f_\lambda\right\|_{K}=\frac{\kappa \sqrt{m}M}{\sqrt{\lambda}},
$$
which proves (\ref{f-lambda-estimate}).
\end{proof}

We are in a position to estimate the sampling error.

\begin{theorem}\label{samplingerrorthm}
	With the assumptions at the beginning of this section and the further assumption that $\kappa\ge1$, for all $0  <  \delta < 1$ such that $\log(2/\delta)\ge1$, with confidence $1-\delta$, it holds
	\begin{equation}\label{sampling-anwser}
	\left\|f_{\bz,\lambda}-f_{\lambda}\right\|_{K} \leq \frac{6 m\kappa M\log(2/\delta)}{\sqrt{n}\lambda}.
	\end{equation}
\end{theorem}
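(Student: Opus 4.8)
The plan is to invoke the Bennett inequality of Lemma \ref{Bennett-corollary} in the Hilbert space $H=\cH_K$ for the random variable $\zeta$. By the scaling estimate (\ref{sampling-error-scaling}) it suffices to bound $\bigl\|\frac1n\sum_{i=1}^n\zeta(x_i,y_i)-E(\zeta)\bigr\|_K$, and Theorem \ref{randomvariable} already supplies the two ingredients Lemma \ref{Bennett-corollary} needs in raw form: the mean $E(\zeta)=L_K(f_\rho-f_\lambda)$ and the almost sure bound $\|\zeta\|_K\le\tilde M=m\kappa(M+\|f_\lambda\|_\infty)$. So the only work left is to convert these into $\lambda$-explicit estimates for $\tilde M$ and for the variance $\sigma^2(\zeta)=E(\|\zeta\|_K^2)$, and then to assemble the pieces and simplify the constant.

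For the variance, the reproducing identity (\ref{riesz-represent-theorem}) gives $\|\zeta\|_K^2=\langle K(x,x)(y-f_\lambda(x)),\,y-f_\lambda(x)\rangle_2\le\|K(x,x)\|_2\,\|y-f_\lambda(x)\|_2^2$, so combining (\ref{matrixnormofK}) with the estimate $\cE(f_\lambda)\le 2mM^2$ from Lemma \ref{observations} yields $\sigma^2(\zeta)\le m\kappa\,\cE(f_\lambda)\le 2m^2\kappa M^2$. For the uniform bound, estimate (\ref{f-lambda-estimate}) of Lemma \ref{observations} gives $\|f_\lambda\|_\infty\le\kappa\sqrt m\,M/\sqrt\lambda$, and since $\kappa\ge1$ the leading ``$M$'' can be absorbed into this term, so $\tilde M\le 2m^{3/2}\kappa^2M/\sqrt\lambda$. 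Feeding these into Lemma \ref{Bennett-corollary} and dividing by $\lambda$ as in (\ref{sampling-error-scaling}) gives, with confidence $1-\delta$,
$$
\|f_{\bz,\lambda}-f_\lambda\|_K\le\frac1\lambda\left(\frac{2\tilde M\log(2/\delta)}{n}+\sqrt{\frac{2\sigma^2(\zeta)\log(2/\delta)}{n}}\right)\le\frac1\lambda\left(\frac{4m^{3/2}\kappa^2M\log(2/\delta)}{n\sqrt\lambda}+\frac{2m\kappa M\log(2/\delta)}{\sqrt n}\right),
$$
where the second inequality uses $\kappa\ge1$ and $\log(2/\delta)\ge1$ (the latter to replace $\sqrt{\log(2/\delta)}$ by $\log(2/\delta)$).

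It remains to collapse the right-hand side into a single multiple of $m\kappa M\log(2/\delta)/(\sqrt n\,\lambda)$. The variance term already contributes $2m\kappa M\log(2/\delta)/(\sqrt n\,\lambda)$; the first term is the delicate one, since it scales like $n^{-1}\lambda^{-3/2}$, and bounding it by $4m\kappa M\log(2/\delta)/(\sqrt n\,\lambda)$ so that the overall constant comes out to $6$ is where the bookkeeping must be done with care (it uses, in the parameter range of interest, that $\lambda\le1$ and that $n\lambda$ is not too small — conditions guaranteed once $\lambda$ is chosen as a decreasing function of $n$ in the final learning-rate estimate). I expect this last reconciliation of the $n^{-1}$ Bennett term with the target $n^{-1/2}$ rate to be the only real obstacle; everything else is a direct substitution from Theorem \ref{randomvariable}, Lemma \ref{observations} and Lemma \ref{Bennett-corollary}.
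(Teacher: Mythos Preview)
Your plan is structurally correct and you have correctly isolated the sole difficulty: the Bennett ``bias'' term scales like $n^{-1}\lambda^{-3/2}$, not like the target $n^{-1/2}\lambda^{-1}$. But you have not actually closed this gap. You write that the reconciliation ``uses\dots that $n\lambda$ is not too small --- conditions guaranteed once $\lambda$ is chosen as a decreasing function of $n$ in the final learning-rate estimate.'' That is not legitimate here: the theorem is stated for \emph{arbitrary} $\lambda>0$, with no link between $n$ and $\lambda$; the choice of $\lambda$ as a function of $n$ happens only later in Theorem~\ref{ultimateratethm}. So as written, your proposal does not prove the statement. (A smaller version of the same issue: your absorption $\tilde M\le 2m^{3/2}\kappa^2 M/\sqrt\lambda$ silently assumes $\sqrt\lambda\le\kappa\sqrt m$, which is again not given.)

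The paper's device for exactly this obstacle is a two-case split on the dimensionless quantity $\kappa\sqrt m/\sqrt{n\lambda}$. If $\kappa\sqrt m/\sqrt{n\lambda}\le 1/(3\log(2/\delta))$, then the offending term $\dfrac{2m\kappa M\log(2/\delta)}{n}\cdot\dfrac{\kappa\sqrt m}{\sqrt{n\lambda}}\cdot\sqrt n$ is controlled directly and the three pieces of $\alpha$ sum to at most $6m\kappa M\log(2/\delta)/\sqrt n$. In the complementary case $\kappa\sqrt m/\sqrt{n\lambda}>1/(3\log(2/\delta))$ the paper \emph{abandons Bennett entirely}: it uses the crude a priori bounds $\|f_{\bz,\lambda}\|_K\le\sqrt m\,M/\sqrt\lambda$ (from putting $f=0$ in (\ref{mul-regnetworks})) and $\|f_\lambda\|_K\le\sqrt m\,M/\sqrt\lambda$ (from (\ref{f-lambda-inequation})), so that by the triangle inequality $\|f_{\bz,\lambda}-f_\lambda\|_K\le 2\sqrt m\,M/\sqrt\lambda$, and then observes that the case hypothesis forces $2\sqrt m\,M/\sqrt\lambda\le 6m\kappa M\log(2/\delta)/(\sqrt n\,\lambda)$. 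This second branch is the idea your proposal is missing.
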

\begin{proof}
By (\ref{sampling-error-scaling}),
\begin{equation}\label{lambda-estimateeq1}
\left\|f_{\bz,\lambda}-f_{\lambda}\right\|_{K} \leq \frac{\alpha}{\lambda}.
\end{equation}
where
$$
\alpha=\left\|\frac{1}{n}\sum_{i=1}^{n}K(x_i,\cdot)(y_i-f_{\lambda}(x_{i}))-L_{K}(f_{\rho}-f_{\lambda})\right\|_{K}.
$$
By (\ref{matrixnormofK}),
$$
\|\zeta\|^2_{K}=\langle K(x,x)(y-f_\lambda(x)),y-f_\lambda(x) \rangle _2\le \|K(x,x)\|_2\|y-f_\lambda(x)\|_2^2\le m\kappa\|y-f_\lambda(x)\|_2^2.
$$
Thus, the second moment of the random variable $\zeta$ satisfies
$$
\sigma^{2}(\zeta)=E(\left\|\zeta\right\|_K^{2}) \leq m\kappa\int_{Z}\left\|f_{\lambda}(x)-y\right\|^{2}_{2}d\rho=m\kappa\cE(f_\lambda).$$
By Lemma \ref{Bennett-corollary} and Theorem \ref{randomvariable}, with confidence $1-\delta$, it holds
\begin{equation}\label{lambda-estimate}
\begin{aligned}	
\alpha :=\left\|\frac{1}{n}\sum_{i=1}^{n}\zeta_{i}-E(\zeta)\right\|_{K}  &\leq \frac{2\tilde{M}\log(2/\delta)}{n}+ \sqrt {\frac{2\kappa m \log(2/\delta)\cE(f_\lambda)}{n}} \\
&=\frac{2m\kappa \log(2/\delta) (M+\left\|f_\lambda\right\|_{\infty})}{n}+\sqrt {\frac{2\kappa m \log(2/\delta)\cE(f_\lambda)}{n}}.
\end{aligned}
\end{equation}
By (\ref{sampling-inequation2}) and (\ref{f-lambda-estimate}),
$$
\alpha \leq \frac{2{m}\kappa M(1+\kappa \sqrt{m}/\sqrt{{\lambda}})\log(2/\delta)}{n}+2m\sqrt{\kappa}   M\sqrt{\frac{\log(2/\delta)}{n}}.
$$
We have two cases to discuss:
\begin{enumerate}
	\item If $\frac{\kappa \sqrt{m}}{\sqrt{n\lambda}} \leq \frac{1}{3log(2/\delta)}$ then
	$$
\begin{aligned}	
	\alpha&= \frac{2{m}\kappa M \log(2/\delta)}{n} +\frac{2 m \kappa M \log(2/\delta)}{\sqrt{n}}\frac{\kappa  \sqrt{m}}{\sqrt{n\lambda}}+2m\sqrt{\kappa} M\frac{\log(2/\delta)}{\sqrt{n}}\frac{1}{\sqrt{\log(2/\delta)}}\\
&\le \frac{6m\kappa M\log(2/\delta)}{\sqrt{n}}.
\end{aligned}
$$
By (\ref{lambda-estimateeq1}),
	$$
	\left\|f_{\bz,\lambda}-f_{\lambda}\right\|_{K} \leq \frac{6 m\kappa M\log(2/\delta)}{\sqrt{n}\lambda}.
	$$
	
	\item If $\frac{\kappa \sqrt{m}}{\sqrt{n\lambda}}   >   \frac{1}{3\log(2/\delta)}$ then
	$$
\frac{6 m\kappa M\log(2/\delta)}{\sqrt{n}\lambda}=\frac{6\sqrt{m} M\log(2/\delta)}{\sqrt{\lambda}}\frac{\kappa \sqrt{m}}{\sqrt{n\lambda}}  \geq \frac{2\sqrt{m}M}{\sqrt{\lambda}}.$$
	Letting $f=0$ in (\ref{mul-regnetworks}) yields
	$$
	\left\|f_{z,\lambda}\right\|_{K} \leq \sqrt{\frac{1}{\lambda}\,\frac{1}{ n}\sum_{i=1}^{n}\left\|y_i\right\|^{2}_{2}} \leq \frac{\sqrt{m}M}{\sqrt{\lambda}}.
	$$
	By (\ref{f-lambda-inequation}),
	$$
	\left\|f_{\lambda}\right\|_{K} \leq \frac{\sqrt{m}M}{\sqrt{\lambda}}.
	$$
	By the triangle inequality,
	$$
	\left\|f_{\bz,\lambda}-f_{\lambda}\right\|_{K} \leq \frac{2\sqrt{m}M}{\sqrt{\lambda}} \leq \frac{6m\kappa M\log(2/\delta)}{\lambda\sqrt{n}}.
	$$
\end{enumerate}
Therefore, we attain (\ref{sampling-anwser}) in both cases.
\end{proof}
\subsection{Ultimate Learning Rate}

We are ready to present the final learning rate for the multi-task regularization networks.

\begin{theorem}\label{ultimateratethm}
Let $\bz=\{(x_i,y_i):1\le i\le n\}$ be i.i.d. drawn from $Z=X\times Y$ according to an unknown probability measure $\rho$. Under the following assumptions:
\begin{itemize}
\item $X$ is a compact metric space and $\rho_X$ is non-degenerated on $X$,

\item the output data is almost surely bounded, that is, $\|y\|_\infty\le M$,

\item $K$ is a universal matrix-valued kernel on $X$,

\item  $f_\rho\in L^r_K(L^2_\rho(X,Y))$  for some $\frac12<r\le 1$,

\item the constant in (\ref{kappa}) satisfies $\kappa\ge 1$,
\end{itemize}
for all $0<\delta<1$ such that $\log(2/\delta)\ge1$, by choosing a regularization parameter $\lambda$ dependent on $n$ and $m$, we have with confidence $1-\delta$
$$
\left\|f_{\bz,\lambda}-f_{\rho}\right\|_{\rho} \leq 4\kappa\log(2/\delta) \left(3\kappa M\right)^{\frac{2r-1}{2r+1}} \, \left\|L_{K}^{-r}f_{\rho}\right\|_{\rho}^{\frac{2}{2r+1}}\  m^{\frac{6r-1}{4r+2}} \left(\frac{1}{n}\right)^{\frac{2r-1}{4r+2}}.
$$
\end{theorem}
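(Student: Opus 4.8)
The plan is to combine the three ingredients already in hand — the norm comparison of Proposition~\ref{boundbyKnorm}, the approximation error bound of Theorem~\ref{approximation-theorem}, and the sampling error bound of Theorem~\ref{samplingerrorthm} — and then to optimize the regularization parameter $\lambda$. First I would use (\ref{boundbyKnormeq2}) to pass from the $\rho$-norm to the $K$-norm and split by the triangle inequality (\ref{error-analysis}) into the sampling error $\|f_{\bz,\lambda}-f_\lambda\|_K$ and the approximation error $\|f_\lambda-f_\rho\|_K$. Substituting (\ref{sampling-anwser}) and (\ref{approximation-inequation2}) gives, with confidence $1-\delta$,
$$
\|f_{\bz,\lambda}-f_\rho\|_\rho\le \kappa\sqrt{m}\Bigl(\frac{6m\kappa M\log(2/\delta)}{\sqrt{n}\,\lambda}+\lambda^{r-1/2}\|L_K^{-r}f_\rho\|_\rho\Bigr).
$$
All the hypotheses required by those two theorems — $\kappa\ge1$, $\log(2/\delta)\ge1$, $\frac12<r\le1$, $f_\rho\in L^r_K(L^2_\rho(X,Y))$, universality of $K$ — are exactly the ones assumed here, and Proposition~\ref{f-rho-HK} guarantees $f_\rho\in\cH_K$ so that all the quantities are well defined.

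The next step is to choose $\lambda$ so as to balance the two terms inside the bracket. Writing $A=6m\kappa M\log(2/\delta)/\sqrt{n}$ and $B=\|L_K^{-r}f_\rho\|_\rho$, the equation $A/\lambda=B\lambda^{r-1/2}$ is solved by $\lambda=(A/B)^{2/(2r+1)}$, which is the regularization parameter I would use; it is manifestly a function of $n$ and $m$ (and of $\delta$ and of the fixed constants $M,\kappa,\|L_K^{-r}f_\rho\|_\rho$). With this choice each of the two terms equals $A\lambda^{-1}=A^{(2r-1)/(2r+1)}B^{2/(2r+1)}$, so that
$$
\|f_{\bz,\lambda}-f_\rho\|_\rho\le 2\kappa\sqrt{m}\,A^{\frac{2r-1}{2r+1}}B^{\frac{2}{2r+1}}.
$$

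It then remains only to expand $A$ and collect exponents: from $A^{\frac{2r-1}{2r+1}}=(6\kappa M)^{\frac{2r-1}{2r+1}}m^{\frac{2r-1}{2r+1}}(\log(2/\delta))^{\frac{2r-1}{2r+1}}n^{-\frac{2r-1}{4r+2}}$ together with the identity $m^{1/2}\cdot m^{\frac{2r-1}{2r+1}}=m^{\frac{6r-1}{4r+2}}$ one reads off a bound of exactly the asserted shape, and two cosmetic simplifications finish it: since $\log(2/\delta)\ge1$ and $0<\frac{2r-1}{2r+1}\le\frac13$ we have $(\log(2/\delta))^{\frac{2r-1}{2r+1}}\le\log(2/\delta)$, and since $6^{\frac{2r-1}{2r+1}}=2^{\frac{2r-1}{2r+1}}3^{\frac{2r-1}{2r+1}}\le2\cdot3^{\frac{2r-1}{2r+1}}$ the factor $2(6\kappa M)^{\frac{2r-1}{2r+1}}$ can be replaced by $4(3\kappa M)^{\frac{2r-1}{2r+1}}$. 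I do not expect a genuine obstacle here: all the analytic content is already packed into Theorems~\ref{approximation-theorem} and~\ref{samplingerrorthm}, so the only place to be careful is the bookkeeping that makes the powers of $m$, $n$ and $\log(2/\delta)$ land on the exponents $\frac{6r-1}{4r+2}$, $\frac{2r-1}{4r+2}$ and $1$ appearing in the statement, and the check that the chosen $\lambda$ is admissible (it is, being positive, and none of the cited bounds impose an upper restriction on $\lambda$).
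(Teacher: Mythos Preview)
Your proposal is correct and follows the same route as the paper: combine Proposition~\ref{boundbyKnorm}, Theorem~\ref{approximation-theorem}, and Theorem~\ref{samplingerrorthm} via (\ref{error-analysis}) and then optimize over $\lambda$. The one small difference worth noting is that the paper first pulls a common factor $2\log(2/\delta)$ outside (using $\log(2/\delta)\ge1$ on the approximation term) before balancing, so that its choice $\lambda=\bigl(3\kappa M/\|L_K^{-r}f_\rho\|_\rho\bigr)^{2/(2r+1)}m^{2/(2r+1)}n^{-1/(2r+1)}$ is independent of $\delta$ --- matching the phrase ``dependent on $n$ and $m$'' in the statement more literally --- whereas your $\lambda$ also depends on $\delta$; both choices lead to the identical final bound.
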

\begin{proof}
By Theorem \ref{approximation-theorem} and Theorem \ref{samplingerrorthm}, we have upper bounds on the approximation error $\left\|f_{\lambda}-f_{\rho}\right\|_{K}$ and the sampling error $\left\|f_{\bz,\lambda}-f_{\lambda}\right\|_{K}$. Thus, by the triangle inequality,
\begin{equation}\label{total-error}
\begin{aligned}
\left\|f_{\bz,\lambda}-f_{\rho}\right\|_{K} &\leq \left\|f_{\bz,\lambda}-f_{\lambda}\right\|_{K} + \left\|f_{\lambda}-f_{\rho}\right\|_{K}\\
&\leq 2\log(2/\delta)\biggl(\frac{3m\kappa M}{\sqrt{n}\lambda}+{\lambda}^{r-\frac{1}{2}}\left\|L_{K}^{-r}f_{\rho}\right\|_{\rho}\biggr).
\end{aligned}
\end{equation}
We now choose an optimal regularization parameter as
$$
\lambda  = \left(\frac{3\kappa M}{\left\|L_{K}^{-r}f_{\rho}\right\|_{\rho}}\right)^{\frac{2}{2r+1}} \ \left(\frac{1}{n}\right)^\frac{1}{2r+1}\  m^{\frac{2}{2r+1}}
$$
to get
\begin{equation}\label{total-errorinHK}
\left\|f_{\bz,\lambda}-f_{\rho}\right\|_{K} \leq 4\log(2/\delta) \left(3\kappa M\right)^{\frac{2r-1}{2r+1}} \, \left\|L_{K}^{-r}f_{\rho}\right\|_{\rho}^{\frac{2}{2r+1}}\, \left(\frac{1}{n}\right)^{\frac{2r-1}{4r+2}}\, m^{\frac{2r-1}{2r+1}}.
\end{equation}
Finally we engage (\ref{boundbyKnormeq2}) to obtain the ultimate learning rate.
\end{proof}

When $m=1$, our estimate on $\|f_{\bz,\lambda}-f_{\rho}\|_{K}$ in (\ref{total-errorinHK}) is identical to that in Theorem 2 of \cite{SmaleZhou2}. Thus, the result above recovers the corresponding one for single-task learning when $m=1$. We remark that there are two crucial differences between our result and the classical results for single-task learning \cite{CuckerSmale,SmaleZhou}. Firstly, the regularization parameter depends both on the number of data and the number of tasks. Secondly, the final learning rate shows an dependence on the number of tasks. It reveals that as the number of tasks increases, the generalization ability of the regularization networks is indeed affected.

Finally, we make some discussions about the third assumption in Theorem \ref{ultimateratethm}:
$$
f_\rho\in L^r_K(L^2_\rho(X,Y)),\ \frac12<r\le 1.
$$
 That $f_\rho\in L^r_K(L^2_\rho(X,Y))$ for some $r>0$ is a standard assumption in learning theory \cite{CuckerSmale,SmaleZhou}. By Corollary \ref{universal-kernel} and Proposition \ref{f-rho-HK}, only $r\ge \frac12$ is able to ensure $f_\rho\in\cH_K$. Therefore, one cannot bound $\|f_{\bz,\lambda}-f_{\rho}\|_{\rho}$ by $\|f_{\bz,\lambda}-f_{\rho}\|_{K}$ if $0<r<\frac12$. Also, the quantity $\lambda^{r-\frac12}$ in the estimate of $\|f_\lambda-f_\rho\|_K$ in Theorem \ref{approximation-theorem} vanishes when $r=\frac12$. We conclude that the estimate methods in our paper do not apply to $0<r\le\frac12$. We shall investigate this case in a future study.

{\small

}
\end{document}